\journal{Stochastic Processes and their Applications}
\newcommand{\R}{\mathbb{R}}
\newcommand{\N}{\mathbb{N}}
\newcommand{\F}{\mathcal{F}}
\newcommand{\D}{\mathcal{D}}
\numberwithin{equation}{section} 
\newtheorem {rem}{Remark}[section]
\newtheorem {thm}[rem]{Theorem}
\newtheorem {lem}[rem]{Lemma}
\newtheorem {prop}[rem]{Proposition}
\newtheorem {assu}[rem]{Assumption}
\begin{document}
	
\begin{frontmatter}
	\title{Uniform Generalization Bound on Time and Inverse Temperature for Gradient Descent Algorithm and its Application to Analysis of Simulated Annealing}
	\author{Keisuke Suzuki}
	\address{Biometrics Research Laboratories, NEC Corporation, 1753, Shimonumabe, Nakahara-Ku,	Kawasaki, Kanagawa 211-8666, Japan}
	\ead{keisuke.suzuki.334@nec.com}
	\begin{abstract}
		In this paper, we propose a novel uniform generalization bound on the time and inverse temperature for stochastic gradient Langevin dynamics (SGLD) in a non-convex setting.
		While previous works derive their generalization bounds by uniform stability, we use Rademacher complexity to make our generalization bound independent of the time and inverse temperature. 
		Using Rademacher complexity, we can reduce the problem to derive a generalization bound on the whole space to that on a bounded region and therefore can remove the effect of the time and inverse temperature from our generalization bound. 
		As an application of our generalization bound, an evaluation on the effectiveness of the simulated annealing in a non-convex setting is also described. 
		For the sample size $n$ and time $s$, we derive evaluations with orders $\sqrt{n^{-1} \log (n+1)}$ and $|(\log)^4(s)|^{-1}$, respectively. 
		Here, $(\log)^4$ denotes the $4$ times composition of the logarithmic function.
	\end{abstract}
	\begin{keyword}
		Generalization Bound, Stochastic Differential Equation, Gradient Descent, Similated Annealing, Non-convex Optimization
	\end{keyword}
\end{frontmatter}

\section{Introduction}

Numerical calculation methods have become practical due to the development of computers, and therefore it has become more and more important to guarantee their performance theoretically. 
In fact, almost all algorithms that achieve numerical solutions include hyperparameters, which are arbitrarily set by the user. 
In general, the setting of hyperparameters greatly affects the performance of the algorithm. 
In particular, it is important to derive an explicit evaluation of the relationship between hyperparameter settings and algorithm performance to determine the optimal hyperparameter settings. 

For stochastic gradient Langevin dynamics (SGLD), which is one of the typical optimization algorithms, \citep{Bertsekas, Kumar, Moulines, Abbasi, Mackey2, Ge, Ge2, Qian, Majka, Kavis, Mou, Taiji, Ragi, Cucumber, Taiji2, Xu, Zhang, Liang}, have derived evaluations on the effectiveness of SGLD from which we can choose appropriate hyperparameter settings. 
Let $\mathcal{Z}$ be the set of all data points and $\ell(w; z) : \R^d \times \mathcal{Z} \to [0, \infty)$ denote the loss on $z \in \mathcal{Z}$ for a parameter $w \in \R^d$. 
$z_1, \dots, z_n$ are independent and identically distributed (IID) samples generated from the distribution $\D$ on $\mathcal{Z}$, and we define the empirical loss by $L_n(w) = \frac{1}{n} \sum_{i=1}^n \ell(w; z_i)$. 
Then, for the step size $\eta > 0$ and the inverse temperature $\beta > 0$, SGLD is defined as follows. 
\begin{align}
	\label{NO_SGD}
	X_{k+1}^{(n, \eta)} 
	= X_k^{(n, \eta)} - \eta \nabla L_n(X_k^{(n, \eta)}) + \sqrt{2 \eta / \beta} \epsilon_k,\quad k \geq 0.
\end{align}
Here, $\epsilon_k$ are IIDs, each of which obeys the $d$-dimensional standard normal distribution. 
Assuming the dissipativity of the loss $\ell(w; z)$ and Lipschitz continuity of its gradient, \citep{Cucumber} showed the following evaluation to (\ref{NO_SGD}), where $L(w) = E_{z \sim \D}[\ell(w; z)]$ denotes the expected loss for the parameter $w$ and $C_i > 0$ are constants independent of $\eta$, $n$, $\beta$, and $k$. 
\begin{align}
	\label{Main_Previous_Cucumber}
	E[L(X_k^{(n, \eta)})] - \min_{w \in \R^d} L(w)
	\leq C_1 \left( \frac{e^{C_2 \beta}}{n} + \sqrt{\eta} e^{C_2 \beta} + \exp \left\{ C_2 \beta - \frac{C_3 k \eta}{e^{C_4 \beta}} \right\} + \frac{\log(\beta +1)}{\beta} \right).
\end{align}
According to (\ref{Main_Previous_Cucumber}), by determining hyperparameters in the order of $\beta$, $\eta$, $k$, and $n$, SGLD (\ref{NO_SGD}) can minimize the expected loss with arbitrary accuracy. 

However, SGLD (\ref{NO_SGD}) always contains a constant error with order $\beta^{-1} \log (\beta+1)$ since it uses the fixed inverse temperature $\beta$. 
The algorithm that increases the inverse temperature and decreases the step size with time evolution to remove this error term is called as simulated annealing (SA). 
For SA, as (\ref{Main_Previous_Cucumber}) indicates, by setting increase and decrease rates of the inverse temperature and step size properly, the error terms caused by the setting of hyperparameters except for $n$ vanish with time evolution. 
Whereas, as in the first term in the R.H.S of (\ref{Main_Previous_Cucumber}), previous works on SGLD \citep{Mou, Ragi, Cucumber} only have derived generalization bounds, which are bounds for $n$, that explode as $\beta$ tends to infinity. 
Hence, it seems that the sample size $n$ should increase with time evolution to control the effect of $\beta$ when SA is applied. 
However, in general, the sample size $n$ has its upper bound and we cannot take $n$ arbitrarily large enough to control the effect of $\beta$. 
In fact, previous works on SA \citep{Laarhoven, Abbas, Bouttier, Zhou, Mitter, Sheu, Lecchini, Locatelli} have not derived generalization bounds, which indicates it is difficult to derive practical generalization bounds to the SA algorithm. 

The first main result in this paper, Theorem \ref{Thm_Uniform_GenBound}, is a refined generalization bound to SGLD (\ref{NO_SGD}). 
While previous works \citep{Ragi, Cucumber} use uniform stability \citep{Hadt}, we use Rademacher complexity to make our generalization bound independent of the time and inverse temperature. 

The second main result, Theorem \ref{Thm_Bound_For_SA}, is the evaluation of the same form as (\ref{Main_Previous_Cucumber}) on the effectiveness of the SA algorithm. 
Using the generalization bound capable of increasing the inverse temperature, we can derive a practical generalization bound to the SA algorithm.

This paper is organized as follows. 
In Section \ref{SEC_Main_Result}, we give accurate statements of our main results of uniform generalization bound on the time and inverse temperature for SGLD algorithm and its application to the evaluation on the SA algorithm. 
Sections \ref{SEC_Proof_Main1} and \ref{SEC_Proof_Main2} are devoted to the proof of the first and second main results, respectively. 
Finally, the results used in Sections \ref{SEC_Proof_Main1} and \ref{SEC_Proof_Main2} are stated and proved in Appendix. 

\section{Main Result}
\label{SEC_Main_Result}

To formulate our first main result, we introduce the following notations. 
$S = (z_1, \dots, z_n) \in \mathcal{Z}^n$ are IIDs generated from the distribution $\D$ on $\mathcal{Z}$. 
Let $\ell(w; z)$ be a loss function and we define the expected loss and the empirical loss by $L(w) = E_{z \sim \D}[\ell(w; z)]$ and $L_n(w) = \frac{1}{n} \sum_{i=1}^n \ell(w; z_i)$, respectively. 
For a $d$-dimensional Brownian motion $W$, the initial value $x_0 \in \R^d$, and the inverse temperature $\beta > 0$, we consider
\begin{align}
	\label{NO_Conti_Emp_SGLD}
	dX_t^{(n)} = - \nabla L_n(X_t^{(n)}) dt + \sqrt{2 / \beta} dW_t,\quad X_0^{(n)} = x_0. 
\end{align}
We impose the following assumption on the loss function $\ell(w; z)$, and therefore the drift coefficient $- \nabla L_n$ of (\ref{NO_Conti_Emp_SGLD}). 

\begin{assu}
	\label{Assum_Main1}
	The loss $\ell(w; z)$ is nonnegative and satisfies $\sup_{z \in \mathcal{Z}} |\ell(0; z)| \leq B$ and $\sup_{z \in \mathcal{Z}} \|\nabla \ell(0; z)\|_{\R^d} \leq A$ for some $A, B > 0$.
	Thus, the expected loss $L(w) = E_{z \sim \D}[\ell(w; z)]$ is well-defined. 
	In addition, $\ell(\cdot; z) \in C^1(\R^d; \R)$ satisfies the following two conditions for all $z \in \mathcal{Z}$.
	\begin{itemize}
		\item[(1)] $(m, b)$-dissipative for some $m, b > 0$. Here, $H \in C^1(\R^d; \R)$ is said to be $(m, b)$-dissipative when the following inequality holds. 
		\begin{align}
			\label{Def_of_Dissipative}
			\langle \nabla H(x), x \rangle_{\R^d} 
			\geq m \| x \|_{\R^d}^2 - b,\qquad x \in \R^d.
		\end{align} 
		\item[(2)] $M$-smooth for some $M > 0$. Here, $H \in C^1(\R^d; \R)$ is said to be $M$-smooth when the following inequality holds. 
		\begin{align}
			\label{Def_of_M-smooth}
			\| \nabla H(x) - \nabla H(y) \|_{\R^d} 
			\leq M \| x - y \|_{\R^d},\qquad x, y \in \R^d.
		\end{align} 
	\end{itemize}
\end{assu}

Under the notation of (\ref{NO_SGD}) and (\ref{NO_Conti_Emp_SGLD}), previous works on SGLD \citep{Mou, Ragi, Cucumber} use the following decomposition. 
\begin{align}
	&E[L(X_k^{(n, \eta)})] - \min_{w \in \R^d} L(w) \notag \\
	&\quad= \left\{ E[L(X_k^{(n, \eta)})] - E[L(X_{k \eta}^{(n)})] \right\} + \left\{ E[L(X_{k \eta}^{(n)})] - E[L_n(X_{k \eta}^{(n)})] \right\} + \left\{ E[L_n(X_{k \eta}^{(n)})] - \min_{w \in \R^d} L(w) \right\}. \label{EQ_Decompose_SGD_Object}
\end{align}
Then, the generalization bound for SGLD corresponds to the bound to the second term in the R.H.S of (\ref{EQ_Decompose_SGD_Object}). 

The first main result in this paper is a uniform evaluation on the time and inverse temperature to the second term in the R.H.S of (\ref{EQ_Decompose_SGD_Object}). 
In the following, we denote $f = O_\alpha(g)$ if there exists a constant $C_\alpha > 0$ that depends only on $\alpha$ such that $f \leq C_\alpha g$ holds. 
Similarly, we denote $f = \Omega_\alpha(g)$ when $f \geq C_\alpha g$ holds.  

\begin{thm}
	\label{Thm_Uniform_GenBound}
	Under Assumption \ref{Assum_Main1}, for sufficiently large $\beta > 0$ and $\alpha_0 = (m, b, M, A, B, d)$, the following inequality holds. 
	\begin{align*}
		| E[L(X_t^{(n)})] - E[L_n(X_t^{(n)})] | 
		\leq O_{\alpha_0}\left( \sqrt{\frac{\log (n+1)}{n}} + (1 + \| x_0 \|_{\R^d}^3) \exp \left\{ - \frac{t}{e^{ \Omega_{\alpha_0}(\beta)}} + O_{\alpha_0}(\beta) \right\} + (1 + \| x_0 \|_{\R^d}^2) \sqrt{ \frac{\log (\beta + 1)}{\beta}} \right).
	\end{align*}
\end{thm}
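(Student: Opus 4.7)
The central obstacle is that $X_t^{(n)}$ takes values in all of $\R^d$, whereas Rademacher-complexity bounds are naturally formulated over a bounded region, and the standard uniform-stability approach used by \citep{Mou,Ragi,Cucumber} pays for the unboundedness by an exponential-in-$\beta$ factor. My plan is to reduce the problem to a ball $K_R := \{w \in \R^d : \|w\| \le R\}$, paying the $t$- and $\beta$-dependent terms of the theorem as the cost of that reduction, and then apply symmetrization on $K_R$ to obtain the $\sqrt{\log(n+1)/n}$ term.

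The first step is to control $X_t^{(n)}$ uniformly in $t$: using the $(m,b)$-dissipativity of $L_n$ (inherited from that of $\ell(\cdot;z)$), apply It\^o's formula to the Lyapunov functions $\|\cdot\|^{2k}$ for $k \le 2$ and Gronwall to get $E[\|X_t^{(n)}\|^{2k}] \le C_k(1 + \|x_0\|^{2k})$ with constants depending only on $\alpha_0$. Next, decompose
\begin{align*}
E[L(X_t^{(n)}) - L_n(X_t^{(n)})]
= E\bigl[(L - L_n)(X_t^{(n)}) \mathbf{1}_{\|X_t^{(n)}\| \le R}\bigr]
+ E\bigl[(L - L_n)(X_t^{(n)}) \mathbf{1}_{\|X_t^{(n)}\| > R}\bigr].
\end{align*}
The tail is handled by Cauchy--Schwarz, using the quadratic growth of $L,L_n$ implied by $M$-smoothness plus the bound at $0$ together with the moment estimate and Markov's inequality. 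For the main term, dominate pointwise by $\sup_{w\in K_R}|L(w)-L_n(w)|$ and invoke standard symmetrization to get $2\mathcal{R}_n(F_{K_R})$ plus a McDiarmid tail, where $F_{K_R} := \{\ell(w;\cdot):w\in K_R\}$ has elements Lipschitz in $w$ with constant $O(R)$, so Massart / Dudley yields $\mathcal{R}_n(F_{K_R}) = O(R\sqrt{\log n/n})$; this is where the $\sqrt{\log(n+1)/n}$ term is born.

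To produce the $t$-dependent exponential term, I would compare $X_t^{(n)}$ to its stationary Gibbs measure $\pi_\beta^{(n)} \propto e^{-\beta L_n}$, since the moment and tail bounds available at stationarity are sharper and do not propagate $\|x_0\|$. Under $(m,b)$-dissipativity and $M$-smoothness in a non-convex setting, the best available Poincar\'e constant for $\pi_\beta^{(n)}$ is of order $e^{-\Omega(\beta)}$ (via Holley--Stroock/Bakry--\'Emery at low temperature), so a synchronous coupling of SDEs gives $W_2$-contraction at rate $e^{-t/e^{\Omega(\beta)}}$; combining with the on-ball $O(R)$ Lipschitz constant of $L$ and $L_n$ and the moment bound from Step~1 yields the $(1+\|x_0\|^3)\exp(-t/e^{\Omega(\beta)}+O(\beta))$ term. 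The remaining residual, after running the comparison to stationarity, is the stationary tail probability $\pi_\beta^{(n)}(K_R^c) \le e^{-\Omega(\beta R^2)}$; balancing $R\sqrt{\log n/n}$, the $1/R$-type tail contribution, and this Gibbs tail by optimizing $R$ as a function of $\beta$ yields the third term $(1+\|x_0\|^2)\sqrt{\log(\beta+1)/\beta}$.

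The main difficulty I expect is in the Wasserstein-coupling step: in the non-convex regime one must combine the exponentially small spectral gap with the polynomial growth of $L$ and $L_n$ while tracking how the $\|x_0\|$-dependent moments of $X_t^{(n)}$ enter the coupling error, so that the prefactor is exactly $(1+\|x_0\|^3)$ rather than something worse. Matching up constants so that the bound is genuinely uniform in $t$ and $\beta$ (in contrast to the stability-based approach) is the place where the choice of coupling, the on-ball Lipschitz estimates, and the tail bound all have to be optimized together.
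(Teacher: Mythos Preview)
Your overall architecture---restrict to a ball $K_R$, apply Rademacher/covering on $K_R$ for the $\sqrt{\log(n+1)/n}$ term, and pay for the complement via Cauchy--Schwarz against a tail probability---matches the paper's. The divergence, and the gap in your plan, is in how $P(\|X_t^{(n)}\|>R)$ is controlled and how $R$ is chosen. The paper takes $R=\sqrt{2m^{-1}(2+b\log 2)}$, a constant depending \emph{only} on $(m,b)$, chosen (via Lemma~\ref{Lem_Lower_Bound_Dissipative_Func}) so that $\|x\|>R$ forces $L_n(x)-\min_w L_n(w)\ge 1$. This yields the one-line inequality
\[
P(\|X_t^{(n)}\|>R)\;\le\;E[L_n(X_t^{(n)})]-\min_w L_n(w)
\;=\;\bigl\{E[L_n(X_t^{(n)})]-\pi_{\beta,L_n}(L_n)\bigr\}+\bigl\{\pi_{\beta,L_n}(L_n)-\min_w L_n(w)\bigr\},
\]
and the two braces are exactly the quantities already bounded in \citep{Cucumber} (Lemma~A.15, Proposition~4.1) and \citep{Ragi} (Proposition~11), producing the second and third terms of the theorem after a square root and multiplication by the factor $(1+\|x_0\|^2)$ coming from $E[|L-L_n|^2(X_t^{(n)})]^{1/2}$. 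Because $R$ does not depend on $\beta$ or $n$, the Rademacher constant on $K_R$ is a genuine $O_{\alpha_0}(\sqrt{\log(n+1)/n})$ with no hidden $\beta$-dependence.

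Your tail control, by contrast, starts from moments plus Markov, which only gives $P(\|X_t^{(n)}\|>R)\le C(1+\|x_0\|^{2k})/R^{2k}$---a constant in $t$ and $\beta$ that cannot produce either of the last two terms. You then propose to ``optimize $R$ as a function of $\beta$'', but this is fatal: the Rademacher constant on $K_R$ grows polynomially in $R$, so the first term would acquire a $\beta$-dependence and the bound would no longer be uniform in $\beta$, which is the whole point of the theorem. Your $W_2$-coupling idea is a legitimate alternative route to the tail (carried out carefully it would even give a Gibbs tail of order $e^{-\Omega(\beta)}$, sharper than the paper's $\log(\beta+1)/\beta$), but in your write-up it is not actually wired into $P(\|X_t^{(n)}\|>R)$---you would still need a Lipschitz surrogate for the indicator $\mathbf{1}_{\{\|x\|>R\}}$---and the sentence deriving the third term from ``balancing'' does not correspond to any concrete computation. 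The idea you are missing is precisely the excess-risk inequality displayed above, which converts the tail problem into quantities already analysed in the SGLD literature, sidesteps the coupling difficulty you flagged as the main obstacle, and removes any need to tune $R$.
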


Next, to formulate our second main result, we introduce the following notations. 
$\gamma : [0, \infty) \to (0, \infty)$ is a strictly increasing function and for a monotone decreasing sequence $\eta = \{ \eta_k \}_{k=1}^\infty$, we set $T_k \coloneqq \sum_{j=1}^k \eta_j$. 
Denoting $\phi^{(\eta)}(t) = \sum_{k=1}^{\infty} T_k \chi_{(T_k, T_{k+1}]}(t)$, we define the SA $Z^{(n)}$ and its discretization $Z^{(n, \eta)}$ with initial values $x_0 \in \R^d$ as follows. 
Here, $\chi_{\Gamma}$ denote the indicator function of $\Gamma$. 
\begin{align}
	dZ_t^{(n)} &= - \nabla L_n(Z_t^{(n)}) dt + \sqrt{2 / \gamma(s)} dW_s,\quad Z_0^{(n)} = x_0, \label{NO_Conti_Emp_SA} \\
	dZ_t^{(n, \eta)} &= - \nabla L_n(Z_{\phi^{(\eta)}(t)}^{(n, \eta)}) dt + \sqrt{2 / \gamma(s)} dW_s,\quad Z_0^{(n, \eta)} = x_0. \label{NO_Discrete_Emp_SA}
\end{align}
Finally, for each $s \geq 0$, we define the function $\alpha(s, \cdot)$ by 
\begin{align}
	\label{Def_alpha_s_t}
	\int_s^{\alpha(s, t)} \frac{\gamma(s)}{\gamma(u)} du
	= t.
\end{align}
The properties of $\alpha(s, \cdot)$ is described in Lemma \ref{Lem_beta_s_t}.
For the function $\gamma$ and the sequence $\eta$, we impose the following assumption. 

\begin{assu}
	\label{Assum_Main2}
	For sufficiently large $t > 0$, $\gamma(t) = (\log)^3 (t)$ holds. 
	Here, $(\log)^k$ denotes the $k$ times composition of the logarithmic function. 
	Furthermore, $\eta_k = 1 / k$ for all $k \in \N$.   
	Thus, $\lim_{k \to \infty} T_k = \infty$ holds. 
\end{assu}

The second main result in this paper is an evaluation on the effectiveness of the SA. 

\begin{thm}
	\label{Thm_Bound_For_SA}
	Under Assumptions \ref{Assum_Main1} and \ref{Assum_Main2}, for sufficiently large $s > 0$ and $\alpha_1 = (m, b, M, A, B, \gamma(0), d)$, the following inequalities hold. 
	\begin{align}
		E\left[L_n(Z_{\alpha(s, s^{2/3})}^{(n)})\right] - \min_{w \in \R^d} L(w) 
		&\leq O_{\alpha_1} \left(\frac{1+\| x_0 \|_{\R^d}^4}{(\log)^4(s)} \right), \label{Main_Result_TimeBound_SA} \\
		\left| E\left[L(Z_{\alpha(s, s^{2/3})}^{(n)})\right] - E\left[L_n(Z_{\alpha(s, s^{2/3})}^{(n)})\right] \right|
		&\leq O_{\alpha_1} \left( \sqrt{\frac{\log (n+1)}{n}} + \frac{1+\| x_0 \|_{\R^d}^4}{(\log)^4(s)} \right), \label{Main_Result_GenBound_SA} \\
		\left| E\left[L(Z_{\alpha(s, s^{2/3})}^{(n)})\right] - E\left[L(Z_{\alpha(s, s^{2/3})}^{(\eta, n)})\right] \right|
		&\leq O_{\alpha_1} \left( (1+\| x_0 \|_{\R^d}^3) \exp \{- \Omega_{\alpha_1}(s^{1/2})\} \right). \label{Main_Result_StepBound_SA}
	\end{align}
	In particular, we have 
	\begin{align*}
		E\left[L(Z_{\alpha(s, s^{2/3})}^{(\eta, n)})\right] - \min_{w \in \R^d} L(w)
		\leq O_{\alpha_1} \left( \sqrt{\frac{\log (n+1)}{n}} + \frac{1+\| x_0 \|_{\R^d}^4}{(\log)^4(s)} \right).
	\end{align*}
\end{thm}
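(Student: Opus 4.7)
The plan is to reduce each of the three displayed estimates to a fixed-inverse-temperature analysis, applied with the effective parameters $\beta = \gamma(s) = (\log)^3(s)$ and effective time $t = s^{2/3}$. The time change $\alpha(s, \cdot)$ defined in (\ref{Def_alpha_s_t}) is precisely designed so that, on $[s, \alpha(s, t)]$, the SA $Z^{(n)}$ can be compared to the SGLD $X^{(n)}$ of (\ref{NO_Conti_Emp_SGLD}) with inverse temperature $\gamma(s)$ and elapsed time $t$; the quantitative properties of $\alpha(s, t)$, in particular the fact that $\gamma(\alpha(s, t))/\gamma(s)$ stays close to $1$ along the regime $t = s^{2/3}$, come from Lemma \ref{Lem_beta_s_t}.

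For (\ref{Main_Result_TimeBound_SA}), I would first establish polynomial moment bounds on $Z^{(n)}$ on $[0, \alpha(s, s^{2/3})]$ via It\^o's formula together with the $(m,b)$-dissipativity of $\nabla L_n$. The time change then identifies the law of $Z^{(n)}_{\alpha(s, s^{2/3})}$ with that of an SGLD-type process at effective parameters $(\beta, t) = (\gamma(s), s^{2/3})$, up to an error controlled by $\gamma(\alpha(s, s^{2/3}))/\gamma(s) - 1$. Applying an optimization bound of the form of the third contribution in (\ref{Main_Previous_Cucumber}) with these effective parameters yields an exponential term $\exp(-s^{2/3}/e^{O(\gamma(s))} + O(\gamma(s)))$ plus a term $O(\log(\gamma(s)+1)/\gamma(s))$. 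For $\gamma(s) = (\log)^3(s)$, the factor $e^{O((\log)^3(s))}$ is polynomial in $(\log)^2(s)$, so the exponential decays super-polynomially in $s$, and $\log((\log)^3(s)+1)/(\log)^3(s)$ is of order $(\log)^4(s)/(\log)^3(s) = O(1/(\log)^4(s))$, giving (\ref{Main_Result_TimeBound_SA}).

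For (\ref{Main_Result_GenBound_SA}), I would apply Theorem \ref{Thm_Uniform_GenBound} with $t = s^{2/3}$ and $\beta = \gamma(s)$, via the same identification. The Rademacher-complexity argument behind Theorem \ref{Thm_Uniform_GenBound} uses only $(m,b)$-dissipativity, $M$-smoothness, and moment control---all of which transfer to $Z^{(n)}$ after the time change---so the three terms in Theorem \ref{Thm_Uniform_GenBound} specialize to $\sqrt{\log(n+1)/n}$, an exponential in $s$ of the same super-polynomial decay as above, and $\sqrt{\log(\gamma(s)+1)/\gamma(s)} = O(1/(\log)^4(s))$; combining them yields (\ref{Main_Result_GenBound_SA}). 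For (\ref{Main_Result_StepBound_SA}), I would estimate $\|Z^{(n)}_u - Z^{(n, \eta)}_u\|_{\R^d}$ by a Gr\"onwall-type argument using the $M$-smoothness of $\nabla L_n$ together with the fact that $\eta_k = 1/k$ decays rapidly enough that the cumulative discretization error over $[0, \alpha(s, s^{2/3})]$ is controlled. The exponential decay $\exp(-\Omega(s^{1/2}))$ follows from one-sided Lipschitz control of the drift combined with this cumulative step-size estimate. The stated combined inequality then follows from (\ref{Main_Result_TimeBound_SA})--(\ref{Main_Result_StepBound_SA}) via the triangle inequality.

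The main obstacle will be making the time-change identification quantitatively precise: one must verify that the small deviation of $\gamma(\alpha(s, t))/\gamma(s)$ from $1$ on $[0, s^{2/3}]$ contributes only lower-order terms, and that the polynomial-in-$\|x_0\|_{\R^d}$ moment bounds on $Z^{(n)}$ produce exactly the $1 + \|x_0\|_{\R^d}^4$ prefactor stated in (\ref{Main_Result_TimeBound_SA}) and (\ref{Main_Result_GenBound_SA}) rather than something growing with $s$.
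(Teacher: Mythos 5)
Your high-level roadmap---approximate the annealed process by a fixed-temperature SGLD via the time change $\alpha(s,\cdot)$, then invoke fixed-$\beta$ bounds with $\beta=\gamma(s)$ and $t=s^{2/3}$, and separately control the discretization error---matches the paper's outline for (\ref{Main_Result_TimeBound_SA}) and (\ref{Main_Result_GenBound_SA}). However, there are two places where your sketch has genuine gaps that the paper's argument is specifically designed to close.

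First, for (\ref{Main_Result_TimeBound_SA}) and (\ref{Main_Result_GenBound_SA}), you describe the time-change comparison as producing an error ``controlled by $\gamma(\alpha(s,s^{2/3}))/\gamma(s)-1$,'' with the main obstacle being to make this quantitative. But the Girsanov-type comparison between $Z^{(n)}_{\alpha(s,\cdot)}$ and $Y^{(n)}(s,\cdot)$ is not controllable by that ratio alone, because $\|\nabla L_n\|_{\R^d}$ is unbounded on $\R^d$ and the Girsanov exponent involves $\int_0^t(\gamma(\alpha(s,u))/\gamma(s)-1)^2\|\nabla L_n(Y(s,u))\|_{\R^d}^2\,du$. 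The paper handles this by (i) decomposing on the event $\{L_n(Z_s^{(n)})\leq(\log)^4(s)\}$ and its complement (the complement being handled by Chebyshev plus moment bounds), (ii) conditioning on $Z_s^{(n)}=x\in\Omega_{0,F}$ so that Lemma \ref{Lem_Diff_Between_Y_Z} applies, and (iii) inside that lemma, localizing on the exit time $\tau$ from the level set $\Omega_{2,F}$, whose probability of being small is controlled by the separate escape-time estimate Lemma \ref{Lem_Markov_Time_Sequence} (itself built on Lemmas \ref{Lem_Select_r0}--\ref{Lem_Diff_Noisy_Non_Noisy}). Without the conditioning and the escape-time control, the Girsanov comparison does not go through, so your ``identification up to a small ratio error'' would not be a rigorous step. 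You do flag the quantitative identification as the obstacle, but you do not propose the conditioning decomposition or the exit-time machinery that resolves it.

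Second, and more seriously, for (\ref{Main_Result_StepBound_SA}) your proposed Gr\"onwall argument with ``one-sided Lipschitz control of the drift'' cannot yield the exponential decay $\exp\{-\Omega_{\alpha_1}(s^{1/2})\}$. The drift $-\nabla L_n$ is only $(m,b)$-dissipative and $M$-smooth, not strongly convex: globally the only one-sided Lipschitz bound available has the \emph{wrong} sign (constant $+M$, not $-m$), so Gr\"onwall produces a factor $e^{Mt}$ that grows, rather than a contraction. The contraction in the non-convex setting is exactly what the approximate reflection coupling furnishes; the paper invokes Proposition \ref{Prop_Discrete_Approx} (adapted from the ARC of Cucumber), which gives a Wasserstein-type bound with rate $c_t=\Omega_{\alpha_1}(\exp\{-|(\log)^2(t)|^{O_{\alpha_1}(1)}\})$ obtained from a concave distance function $f_t$ and the Lyapunov function $\bar V_2$, not from any Gr\"onwall estimate. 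The subsequent integral estimate of $\int_0^t e^{c_t u}\sum_k\sqrt{\eta_{k+1}}\chi_{(T_k,T_{k+1}]}(u)\,du$, together with $T_k\approx\log k$ and $\eta_k=1/k$, is what produces the $\exp\{-\Omega(s^{1/2})\}$ rate after setting $t=\alpha(s,s^{2/3})$. Your route would need to be replaced by an equivalent coupling/contraction argument; a one-sided Lipschitz plus Gr\"onwall computation is not sufficient here.
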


Almost all of the previous works on SA \citep{Laarhoven, Abbas, Bouttier, Zhou, Mitter, Sheu, Lecchini, Locatelli} consider the optimization problem on the discrete or bounded space. 
In addition, these works only show that the SA algorithm approaches any fixed neighborhood of minimizers of objective functions. 
Therefore, Theorem \ref{Thm_Bound_For_SA} is novel in that it considers the optimization problem on $\R^d$ and derives the evaluation equivalent to (\ref{Main_Previous_Cucumber}). 

\begin{rem}
	The sequence $\{ Z_{T_k}^{(n, \eta)} \}_{k=0}^\infty$, which is constructed by extracting values from $Z^{(n, \eta)}$ at each $T_k$, has the same law as the sequence
	\begin{align}
		\label{Def_True_Discrete_SA}
		\tilde{Z}_{k+1}^{(n, \eta)}
		= \tilde{Z}_k^{(n, \eta)} - \eta_k \nabla L_n(\tilde{Z}_k^{(n, \eta)}) + \sqrt{\tilde{\eta_k}} \epsilon_k,\quad \tilde{Z}_0^{(n, \eta)} = x_0.
	\end{align}
	Here, $\tilde{\eta}_k = \int_{T_k}^{T_{k+1}} 2 \gamma(t)^{-1} dt$. 
\end{rem}

\section{Proof of Theorem \ref{Thm_Uniform_GenBound}}
\label{SEC_Proof_Main1}

In this section, we prove our first main result, Theorem \ref{Thm_Uniform_GenBound}. 
Generalization bounds given in \citep{Ragi, Cucumber} are based on uniform stability \citep{Hadt}. 
In the following, by Rademacher complexity, we derive a generalization bound capable of increasing the inverse temperature. 
While our bound is capable of increasing the inverse temperature, the order of it degrades from $n^{-1}$ to $\sqrt{n^{-1}\log (n+1)}$ compared with the results by previous works.

First, we show Lemma \ref{Lem_GenBound_Fixed_R} below, which is based on existing results Theorems \ref{Thm_Covering_Num_Bound} and \ref{Thm_Rademach_GenBound} on Rademacher complexity. 

\begin{lem}
	\label{Lem_GenBound_Fixed_R}
	For any $R > 0$, we have 
	\begin{align*}
		E\left[ \sup_{\| w \|_{\R^d} \leq R} | L(w) - L_n(w) | \right] 
		\leq O_{M, A, B, d, R} \left( \sqrt{\frac{\log (n+1)}{n}} \right).
	\end{align*}
\end{lem}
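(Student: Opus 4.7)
The plan is to reduce the supremum over the Euclidean ball $\{\|w\|_{\R^d} \leq R\}$ to a standard covering-number/Rademacher complexity estimate, chaining the two cited theorems. First I would extract from Assumption \ref{Assum_Main1} quantitative control on $\ell$ restricted to this ball. Specifically, $M$-smoothness together with $\sup_z \|\nabla \ell(0;z)\|_{\R^d} \leq A$ yields $\|\nabla \ell(w;z)\|_{\R^d} \leq A + MR$ for all $\|w\|_{\R^d} \leq R$ and every $z \in \mathcal{Z}$. Consequently $\ell(\cdot;z)$ is $(A+MR)$-Lipschitz on this ball, and combined with $|\ell(0;z)| \leq B$ it is uniformly bounded there by $B + (A+MR)R$.

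Next I would introduce the function class $\mathcal{F}_R = \{z \mapsto \ell(w;z) : \|w\|_{\R^d} \leq R\}$ and observe that the parameterization $w \mapsto \ell(w;\cdot)$ is $(A+MR)$-Lipschitz uniformly in $z$. A volumetric argument produces a $\delta$-net of the Euclidean ball of radius $R$ with at most $(3R/\delta)^d$ points, which pushes forward to an $\epsilon$-cover of $\mathcal{F}_R$ in supremum norm over $\mathcal{Z}$ of cardinality at most $(3R(A+MR)/\epsilon)^d$. Hence the covering-number logarithm satisfies $\log N(\epsilon, \mathcal{F}_R) = O_{M,A,d,R}(\log(1/\epsilon) + 1)$, which is exactly the input required by Theorem \ref{Thm_Covering_Num_Bound}.

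With these ingredients in hand, I would invoke the symmetrization-style bound in Theorem \ref{Thm_Rademach_GenBound} to dominate $E[\sup_{\|w\|_{\R^d} \leq R}|L(w)-L_n(w)|]$ by a constant multiple of the Rademacher complexity of $\mathcal{F}_R$, and then apply Theorem \ref{Thm_Covering_Num_Bound} to bound this complexity in terms of the above covering estimate. Either optimizing the resulting Dudley-type integral, or, more simply, choosing the scale $\epsilon = 1/\sqrt{n}$ in a two-term decomposition of the form $R_n(\mathcal{F}_R) \leq O(\epsilon) + O(\sqrt{\log N(\epsilon)/n})$, produces the advertised rate of order $\sqrt{\log(n+1)/n}$, with the prefactor depending only on $(M,A,B,d,R)$.

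The only genuine challenge is bookkeeping: one must verify that every implicit constant depends solely on the declared parameters $(M,A,B,d,R)$. The Lipschitz constant $A+MR$ enters through the covering scale, the uniform bound $B+(A+MR)R$ enters through the Rademacher step, and the dimension $d$ enters through the volumetric factor; the $+1$ in $\log(n+1)$ absorbs the small-$n$ regime. Beyond this accounting, the argument is a textbook application of covering-number Rademacher complexity bounds.
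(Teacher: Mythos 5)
Your proposal is correct and follows essentially the same route as the paper: define the restricted class $\mathcal{F} = \{\ell(w;\cdot) : \|w\|_{\R^d}\le R\}$, use $M$-smoothness to get the Lipschitz constant $A+MR$ and the uniform bound $B+(A+MR)R$, bound the covering number volumetrically, and chain Theorems~\ref{Thm_Rademach_GenBound} and~\ref{Thm_Covering_Num_Bound}. The only cosmetic difference is the choice of scale ($\varepsilon = 1/\sqrt{n}$ versus the paper's $\varepsilon = n^{-1}$), which yields the same $\sqrt{n^{-1}\log(n+1)}$ rate.
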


\begin{proof}
	Let $\F = \{ \ell(w; \cdot) \mid \| w \|_{\R^d} \leq R \}$.
	If $\| w \|_{\R^d} \leq R$, then $\sup_{z \in \mathcal{Z}} \| \nabla \ell(w; z) \|_{\R^d} \leq A + M R$ holds. 
	Thus, for any $\| w \|_{\R^d}, \| v \|_{\R^d} \leq R$, we have $| \ell(w; z) - \ell(v; z) | \leq (A + M R) \| w -v \|_{\R^d}$. 
	Furthermore, for any $\delta > 0$, $\{ w \in \R^d \mid \| w \|_{\R^d} \leq R \}$ can be covered by $(\delta^{-1} R \sqrt{d} + 1)^d$ closed balls with radius $\delta$. 
	Therefore, with the notation in Theorem \ref{Thm_Covering_Num_Bound}, we obtain
	\begin{align*}
		C(\F, n^{-1}, \| \cdot \|_{1, S}) \leq (n R (A + M R) \sqrt{d} + 1)^d.
	\end{align*}
	Similarly, if $\| w \|_{\R^d} \leq R$, then $|\ell(w; z)| \leq B + (A + M R) R$ holds, and therefore
	\begin{align*}
		\sup_{f \in \F} \left( \frac{1}{n} \sum_{i=1}^n f(z_i)^2 \right)^{1/2}
		\leq B + (A + M R) R.
	\end{align*}
	As a result, applying Theorem \ref{Thm_Covering_Num_Bound} to $\varepsilon = n^{-1}$, we obtain
	\begin{align*}
		\hat{R}_n(\F, S) 
		\leq \frac{1}{n} + \{ B + (A + M R) R \} \sqrt{\frac{2 d \log (n R (A + M R) \sqrt{d} + 1)}{n}}.
	\end{align*}
	Theorem \ref{Thm_Rademach_GenBound} completes the proof. 
\end{proof}

Lemma \ref{Lem_GenBound_Fixed_R} proves Theorem \ref{Thm_Uniform_GenBound} as follows. 
Let $R = \sqrt{2 m^{-1} (2 + b \log 2)}$. 
If $\| x \|_{\R^d} > R$, then we have $L_n(x) - \min_{w \in \R^d} L_n(w) \geq 1$ by Lemma \ref{Lem_Lower_Bound_Dissipative_Func}. 
Thus, for the Gibbs measure $\pi_{\beta, L_n}(dw) \propto e^{- \beta L_n(w)} dw$, 
\begin{align*}
	P(\| X_t^{(n)} \|_{\R^d} > R) 
	\leq E[L_n(X_t^{(n)}) - \pi_{\beta, L_n}(L_n)] + E\left[ \pi_{\beta, L_n}(L_n) - \min_{w \in \R^d} L_n(w) \right]
\end{align*}
holds. 
Therefore, Lemma A.15, Proposition 4.1 in \citep{Cucumber} and Proposition 11 in \citep{Ragi} yield
\begin{align}
	\label{IQ_LD_Away_Bound}
	P(\| X_t^{(n)} \|_{\R^d} > R) 
	\leq O_{\alpha_0}\left( (1 + \| x_0 \|_{\R^d}^2) \exp \left\{ - \frac{t}{e^{\Omega_{\alpha_0}(\beta)}} + O_{\alpha_0}(\beta) \right\} + \frac{\log (\beta + 1)}{\beta} \right).
\end{align}
Since Lemmas \ref{Lem_Lower_Bound_Dissipative_Func} and \ref{Lp_bound_SGLD} yield $E[|L(X_t^{(n)}) - L_n(X_t^{(n)})|^2]^{1/2} \leq O_{\alpha_0}(1+\| x_0 \|_{\R^d}^2)$, 
\begin{align*}
	E[L(X_t^{(n)})] - E[L_n(X_t^{(n)})] 
	&\leq E\left[ \sup_{\| w \|_{\R^d} \leq R} | L(w) - L_n(w) | \right] + O_{\alpha_0} \left( (1+\| x_0 \|_{\R^d}^2) P(\| X_t^{(n)} \|_{\R^d} > R)^{1/2} \right)
\end{align*}
holds. 
The desired result follows from Lemma \ref{Lem_GenBound_Fixed_R} and (\ref{IQ_LD_Away_Bound}). 
\qed

\section{Proof of Theorem \ref{Thm_Bound_For_SA}}
\label{SEC_Proof_Main2}

In this section, we prove Theorem \ref{Thm_Bound_For_SA}. 
As for (\ref{Main_Result_TimeBound_SA}) and (\ref{Main_Result_GenBound_SA}), we follow the same scheme as in \citep{Sheu}. 
That is, for a sufficiently large time $s$, we approximate the SA (\ref{NO_Conti_Emp_SA}) by SGLD
\begin{align}
	\label{NO_Conti_Approx_SA}
	dY^{(n)}(s, t) 
	= -\nabla L_n(Y^{(n)}(s, t)) dt + \sqrt{2 / \gamma(s)} dW_t
\end{align}
and reduce the problem of deriving bounds for (\ref{NO_Conti_Emp_SA}) to that for (\ref{NO_Conti_Approx_SA}). 
As an approximation of (\ref{NO_Conti_Emp_SA}) by (\ref{NO_Conti_Approx_SA}), we use Lemma \ref{Lem_Diff_Between_Y_Z}, which is a refinement of Lemma 2 in \citep{Sheu}. 
On the other hand, we prove (\ref{Main_Result_StepBound_SA}) directly using approximate reflection coupling (ARC) proposed in \citep{Cucumber}. 

\subsection{Proof of (\ref{Main_Result_TimeBound_SA})}

For sufficiently large $s > 0$, we consider the decomposition
\begin{align}
	E[L_n(Z_{\alpha(s, s^{2/3})})] - \min_{w \in \R^d} L(w) 
	&= E\left[L_n(Z_{\alpha(s, s^{2/3})}) - \min_{w \in \R^d} L(w); \{ L_n(Z_s^{(n)}) \leq (\log)^4 (s) \}\right] \notag \\
	&\quad+ E\left[L_n(Z_{\alpha(s, s^{2/3})}) - \min_{w \in \R^d} L(w); \{ L_n(Z_s^{(n)}) > (\log)^4 (s) \}\right]. \label{EQ_Decompose_TimeBound_SA}
\end{align}
By Chebyshev's inequality and Lemmas \ref{Lem_Lower_Bound_Dissipative_Func} and \ref{Lp_bound_SGLD}, the second term in the R.H.S. of (\ref{EQ_Decompose_TimeBound_SA}) has the desired bound. 
To derive the bound for the first term, fix arbitrarily $x \in \R^d$ so that $L_n(x) \leq (\log)^4 (s)$ holds. 
In addition, suppose that $Y^{(n)}(s, \cdot)$ defined by (\ref{NO_Conti_Approx_SA}) has an initial value $x$. 
Then, Lemma \ref{Lem_Diff_Between_Y_Z} yields 
\begin{align*}
	\left| E[L_n(Z_{\alpha(s, s^{2/3})}) \,|\, S, Z_s^{(n)} = x] - E\left[ L_n(Y^{(n)}(s, s^{2/3})) \,\big|\, S \right] \right|
	\leq O_{\alpha_1}\left( \frac{1 + \| x \|_{\R^d}^2}{\sqrt{(\log)^2 (s)}} \right).
\end{align*}
Furthermore, applying Lemma A.15, Proposition 4.1 in \citep{Cucumber} and Proposition 11 in \citep{Ragi} to the R.H.S of 
\begin{align*}
	E\left[ L_n(Y^{(n)}(s, s^{2/3})) \,\big|\, S \right] - \min_{w \in \R^d} L_n(w)
	&\leq \left\{ E\left[ L_n(Y^{(n)}(s, s^{2/3})) \,\big|\, S \right] - \pi_{\gamma(s), L_n}(L_n) \right\} + \left\{ \pi_{\gamma(s), L_n}(L_n) - \min_{w \in \R^d} L_n(w) \right\}, 
\end{align*}
we obtain
\begin{align*}
	E\left[ L_n(Y^{(n)}(s, s^{2/3})) \,\big|\, S \right] - \min_{w \in \R^d} L(w)
	\leq O_{\alpha_1} \left( (1+\| x \|_{\R^d}^2) \exp \left\{ - \frac{s^{2/3}}{e^{\Omega_{\alpha_1}(\gamma(s))}} + O_{\alpha_1}(\gamma(s)) \right\} + \frac{\log(\gamma(s) + 1)}{\gamma(s)} \right).
\end{align*}
In particular, since $\gamma(s) = (\log)^3 (s)$ for sufficiently large $s > 0$, 
\begin{align*}
	E[L_n(Z_{\alpha(s, s^{2/3})}) \,|\, S, Z_s^{(n)} = x] - \min_{w \in \R^d} L_n(w)
	\leq O_{\alpha_1}\left( \frac{1 + \| x \|_{\R^d}^2}{(\log)^4(s)} \right)
\end{align*}
holds. 
In addition, $E[\min_{w \in \R^d} L_n(w)] \leq \min_{w \in \R^d} L(w)$ holds. 
Therefore, integrating both sides with respect to $P(Z_s^{(n)} \in dx)$ on $\{ x \in \R^d \mid L_n(x) \leq (\log)^4(s) \}$ and taking expectation on $S$, Lemma \ref{Lp_bound_SGLD} proves the desired result. 
\qed

\subsection{Proof of (\ref{Main_Result_GenBound_SA})}

As in the proof of (\ref{Main_Result_TimeBound_SA}), for sufficiently large $s > 0$, we consider the decomposition
\begin{align}
	E\left[L(Z_{\alpha(s, s^{2/3})}^{(n)})\right] - E\left[L_n(Z_{\alpha(s, s^{2/3})}^{(n)})\right]
	&= E\left[L(Z_{\alpha(s, s^{2/3})}^{(n)}) - L_n(Z_{\alpha(s, s^{2/3})}^{(n)}); \{ L_n(Z_s^{(n)}) \leq (\log)^4 (s) \}\right] \notag \\
	&\quad+ E\left[L(Z_{\alpha(s, s^{2/3})}^{(n)}) - L_n(Z_{\alpha(s, s^{2/3})}^{(n)}); \{ L_n(Z_s^{(n)}) > (\log)^4 (s) \}\right]. \label{EQ_Decompose_GenBound_SA}
\end{align}
Then, the second term in the R.H.S of (\ref{EQ_Decompose_GenBound_SA}) has the desired bound. 
To derive the bound for the first term, fix arbitrarily $x \in \R^d$ so that $L_n(x) \leq (\log)^4 (s)$ and suppose that $Y^{(n)}(s, \cdot)$ defined by (\ref{NO_Conti_Approx_SA}) has an initial value $x$.   
Then, we have by Theorem \ref{Thm_Uniform_GenBound}
\begin{align*}
	\left| E\left[ L(Y^{(n)}(s, s^{2/3})) \,\big|\, S \right] - E\left[ L_n(Y^{(n)}(s, s^{2/3})) \,\big|\, S \right] \right|
	\leq O_{\alpha_1}\left( \sqrt{\frac{\log (n+1)}{n}} + (1 + \| x \|_{\R^d}^3) \sqrt{\frac{\log(\gamma(s) + 1)}{\gamma(s)}} \right).
\end{align*}
Therefore, we can prove (\ref{Main_Result_GenBound_SA}) in a similar manner to the proof of (\ref{Main_Result_TimeBound_SA}). 

\subsection{Proof of (\ref{Main_Result_StepBound_SA})}

(\ref{Main_Result_GenBound_SA}) can be proved in the same way as Theorem 2.4 (2) in \citep{Cucumber}. 
To explain this, we introduce the following notations. 
For $p > 0$, define $V_p : \R^d \to \R$ by $V_p(x) = \| x \|_{\R^d}^p$ and let $\bar{V}_p(x) = 1 + V_p(x)$. 
For constants $C(p)$ and $\lambda(p)$ defined by
\begin{align}
	\label{p-th_Lyap_Constant}
	\lambda(p) 
	= \frac{mp}{2},\quad 
	C(p) 
	= \lambda(p) \left\{ \frac{2}{m} \left( \frac{d + p-2}{\gamma(0)} + b \right) \right\}^{p/2},
\end{align} 
let  
\begin{align*}
	C = C(2) + \lambda(2),\qquad 
	\lambda = \lambda(2).
\end{align*}
For sets
\begin{align*}
	S_1 
	&\coloneqq \{ (x, y) \in \R^d \times \R^d \mid \bar{V}_2(x) + \bar{V}_2(y) \leq 2 \lambda^{-1} C \}, \\ 
	S_2
	&\coloneqq \{ (x, y) \in \R^d \times \R^d \mid \bar{V}_2(x) + \bar{V}_2(y) \leq 4C(1+\lambda^{-1}) \}, 
\end{align*}
let $R_1$ and $R_2$ be the diameters of $S_1$ and $S_2$, respectively, where the diameter of a set $\Gamma \subset \R^d$ is defined by $\sup_{x, y \in \Gamma} \| x - y \|_{\R^d}$. 

Fix $t > 0$ and we define $\kappa_t$ and $Q(\kappa_t)$ by
\begin{align}
	\label{Condi_On_kappa}
	\kappa_t 
	\coloneqq \min \left\{ \frac{1}{2}, \frac{2}{C \gamma(t) (e^{2 R_1} - 1 - 2 R_1)} \exp \left\{ - \frac{M \gamma(t)}{8} R_1^2 \right\} \right\} 
	\in (0, 1)
\end{align}
and
\begin{align*}
	Q(\kappa)
	\coloneqq \sup_{x \in \R^d} \frac{\| \nabla \bar{V}_2(x) \|_{\R^d}}{\max \{ \bar{V}_2(x), \kappa_t^{-1} \}} 
	= \sup_{x \in \R^d} \frac{2 \| x \|_{\R^d}}{\max \{ 1 + \| x \|_{\R^d}^2, \kappa_t^{-1} \}} 
	= 2 \sqrt{ \kappa_t - \kappa_t^2 } \in (0, 1],
\end{align*}
respectively. 
In addition, we define functions $\varphi_t, \Phi_t : [0, \infty) \to [0, \infty)$ by
\begin{align*}
	\varphi_t(r) 
	\coloneqq \exp \left( - \frac{M \gamma(t)}{8} r^2 - 2 Q(\kappa_t) r \right),\quad
	\Phi_t(r) = \int_0^r \varphi_t(s) ds. 
\end{align*}
For constants $\zeta_t$, $\xi_t$ and $c_t$ defined by
\begin{align}
	\label{DefOfBetaAndXi}
	\frac{1}{\zeta_t} 
	\coloneqq \int_0^{R_2} \Phi_t(s) \varphi_t(s)^{-1} ds,\quad
	\frac{1}{\xi_t} 
	\coloneqq \int_0^{R_1} \Phi_t(s) \varphi_t(s)^{-1} ds,\quad
	c_t 
	\coloneqq \min \left\{ \frac{\zeta_t}{\gamma(t)}, \frac{\lambda}{2}, 2 C \lambda \kappa_t \right\}, 
\end{align}
let 
\begin{align*}
	g_t(r) 
	\coloneqq 1 - \frac{\zeta_t}{4} \int_0^{\min \{r, R_2\}} \Phi_t(s) \varphi_t(s)^{-1} ds - \frac{\xi_t}{4} \int_0^{\min \{r, R_1\}} \Phi_t(s) \varphi_t(s)^{-1} ds. 
\end{align*}
Furthermore, for
\begin{align*}
	f_t(r) 
	\coloneqq 
	\begin{cases}
		{\displaystyle\int_0^{\min \{r, R_2\}} \varphi_t(s) g_t(s) ds}, &r \geq 0 \\
		r, &r<0
	\end{cases}
\end{align*}
and $U_t(x, y) \coloneqq 1 + \kappa_t \bar{V}_2(x) + \kappa_t \bar{V}_2(y)$, let
\begin{align*}
	\rho_{2, t}(x, y) 
	= f_t(\| x-y \|_{\R^d}) U_t(x, y),\qquad x, y \in \R^d. 
\end{align*}
Finally, for probability measures $\mu$ and $\nu$ on $\R^d$, denoting the set of all coupling between them by $\Pi(\mu, \nu)$, let
\begin{align}
	\label{W_rho2}
	\mathcal{W}_{\rho_{2, t}}(\mu, \nu) 
	\coloneqq \inf_{\gamma \in \Pi(\mu, \nu)} \int_{\R^d \times \R^d} \rho_{2, t}(x, y) \gamma(dx dy). 
\end{align}
Here, for random variables $Z_1$ and $Z_2$, $\mathcal{W}_{\rho_2}(\mathcal{L}(X_1), \mathcal{L}(X_2))$  may be abbreviated as $\mathcal{W}_{\rho_2}(X_1, X_2)$. 

With aforementioned notations, noting the monotonicity of the function $\gamma$, we can prove the following in the same way as Proposition 5.1 in \citep{Cucumber}. 

\begin{prop}
	\label{Prop_Discrete_Approx}
	For any $0 \leq s \leq t$, the following inequality holds. 
	\begin{align}
		\label{Main_Result_StepSizeBound}
		\mathcal{W}_{\rho_{2, t}}(Z_s^{(n)}, Z_s^{(\eta, n)})
		&\leq O_{\alpha_1} \left( (1 + \| x_0 \|_{\R^d}^3) e^{- c_t s} \int_0^s e^{ c_t u} \sum_{k=0}^\infty \sqrt{\eta_{k+1}} \chi_{(T_k, T_{k+1}]}(u) du \right). 
	\end{align}
\end{prop}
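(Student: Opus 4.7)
The plan is to mimic the proof of Proposition 5.1 in \citep{Cucumber}, which bounds the discretization error of SGLD via the approximate reflection coupling (ARC), and to track explicitly where the time-varying temperature $\gamma(t)$ enters. First I would construct an ARC $(Z_u^{(n)}, Z_u^{(\eta, n)})_{0 \leq u \leq s}$, driven by a common Brownian motion outside a small contact region and by reflected Brownian motion inside it, with both processes started from $x_0$. Under this coupling, the difference $Z_u^{(n)} - Z_u^{(\eta, n)}$ solves an SDE whose drift is $-[\nabla L_n(Z_u^{(n)}) - \nabla L_n(Z_{\phi^{(\eta)}(u)}^{(\eta, n)})]$ and whose diffusion coefficient vanishes in the radial direction, exactly as in \citep{Cucumber}.

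Next I would apply It\^o's formula to $\rho_{2, t}(Z_u^{(n)}, Z_u^{(\eta, n)}) = f_t(\|Z_u^{(n)} - Z_u^{(\eta, n)}\|_{\R^d}) \, U_t(Z_u^{(n)}, Z_u^{(\eta, n)})$ for $u \in (T_k, T_{k+1}]$. The drift decomposes into three pieces: (i) a genuine contraction piece controlled by $-c_t \rho_{2, t}$, coming from the $M$-smoothness and dissipativity of $\nabla L_n$ together with the tailored shape of $f_t, g_t$ and the Lyapunov part $U_t$, exactly as in the SGLD case; (ii) a discretization piece of the form $\langle \nabla L_n(Z_{\phi^{(\eta)}(u)}^{(\eta, n)}) - \nabla L_n(Z_u^{(\eta, n)}), \nabla_y \rho_{2, t}\rangle_{\R^d}$, which by $M$-smoothness and the standard one-step SDE estimate $E[\|Z_u^{(\eta, n)} - Z_{T_k}^{(\eta, n)}\|_{\R^d}^2 \mid \mathcal{F}_{T_k}] \lesssim \eta_{k+1} \bar V_2(Z_{T_k}^{(\eta, n)})$ yields an error of order $\sqrt{\eta_{k+1}}$ after combining with Lemma~\ref{Lp_bound_SGLD}-type moment bounds and the linear growth of $\nabla_y \rho_{2, t}$ in $\bar V_2$; and (iii) a time-derivative piece $\partial_t \rho_{2, t}$ coming from the fact that $\rho_{2, t}$ depends on $t$ through $\gamma(t)$, $\kappa_t$, $\varphi_t$, $\Phi_t$, $f_t$, $g_t$, and $U_t$. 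Once these are assembled, Gronwall's inequality on the interval $[0, s]$ with rate $c_t$ (which is the smaller of the contraction rates on the two regimes) produces the stated bound, since the driving error on $(T_k, T_{k+1}]$ is $\sqrt{\eta_{k+1}}$ multiplied by moments of $Z^{(n)}$ and $Z^{(\eta, n)}$ that are $O_{\alpha_1}(1 + \|x_0\|_{\R^d}^3)$.

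The main obstacle, and the one that forces the remark about \emph{monotonicity of $\gamma$}, is piece (iii): in \citep{Cucumber} the inverse temperature was fixed, so $\rho_{2, t}$ did not depend on $t$ and $\partial_t \rho_{2, t} \equiv 0$. Here, as $t$ grows, $\gamma(t)$ increases, and this forces $\kappa_t$, $\varphi_t$, $f_t$ and $U_t$ to vary. The needed observation is that, because $\gamma$ is monotone increasing, $\kappa_t$ is monotone decreasing and $\varphi_t(r)$ is pointwise non-increasing in $t$, so the functional $\rho_{2, t}(x, y)$ is itself monotone non-increasing in $t$ at every fixed $(x, y)$; consequently $\partial_t \rho_{2, t} \leq 0$ and this piece only helps the contraction inequality rather than hurting it. With that sign observation in place, the remainder of the argument is a line-by-line copy of the proof of Proposition 5.1 in \citep{Cucumber}, with $\beta$ replaced by $\gamma(t)$ and with the $\eta$ in the one-step error replaced by the variable $\eta_{k+1}$, so that the final integral in \eqref{Main_Result_StepSizeBound} sums the per-step errors $\sqrt{\eta_{k+1}}$ weighted by the discounted contraction factor $e^{-c_t (s-u)}$.
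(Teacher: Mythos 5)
Your high-level plan --- ARC coupling, It\^o's formula for $\rho_{2,t}$, per-step error of order $\sqrt{\eta_{k+1}}$ on $(T_k, T_{k+1}]$, moment bounds supplying the $1+\|x_0\|_{\R^d}^3$ factor, and Gronwall with rate $c_t$ --- matches the paper's intended adaptation of Proposition 5.1 in \citep{Cucumber}. However, there is a genuine confusion in your ``piece (iii),'' which in the intended fixed-metric argument does not exist, and your diagnosis of where the monotonicity of $\gamma$ is needed is not correct.

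The proposition bounds $\mathcal{W}_{\rho_{2,t}}(Z_s^{(n)}, Z_s^{(\eta,n)})$ for $s\leq t$ with a \emph{fixed} parameter $t$: the functions $\kappa_t$, $\varphi_t$, $f_t$, $U_t$, $c_t$ and the metric $\rho_{2,t}$ are all frozen at $t$. Applying It\^o's formula to $u \mapsto \rho_{2,t}(Z_u^{(n)}, Z_u^{(\eta,n)})$ on $u \in [0,s]$ therefore produces no $\partial_t\rho_{2,t}$ term --- $t$ is not the variable of differentiation. What \emph{does} change with $u$ is the diffusion coefficient $\sqrt{2/\gamma(u)}$ of the underlying dynamics, which differs from the $\sqrt{2/\gamma(t)}$ that $\rho_{2,t}$ was tailored to. Monotonicity of $\gamma$ is needed precisely so that $\gamma(u)\leq\gamma(t)$ for $u\leq t$. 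In the Eberle-type one-dimensional estimate, the second-order It\^o term is $\frac{4}{\gamma(u)}f_t''(r)$, and since $f_t$ is concave ($f_t''<0$), having $\gamma(u)\leq\gamma(t)$ gives $\frac{4}{\gamma(u)}f_t''(r)\leq\frac{4}{\gamma(t)}f_t''(r)$, so the contraction inequality that defines $c_t$ for the temperature $\gamma(t)$ automatically holds at each earlier time $u$ with rate at least $c_t$. Likewise the Lyapunov drift condition for $U_t$ is uniform because $C(p)$ is defined through $\gamma(0)$. That is the content of the paper's phrase ``noting the monotonicity of the function $\gamma$.''

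A secondary issue: your fallback claim that $\rho_{2,t}(x,y)$ is pointwise nonincreasing in $t$ (via ``$\varphi_t(r)$ is pointwise nonincreasing in $t$'') is not obvious and is not needed. In $\varphi_t(r)=\exp\{-\frac{M\gamma(t)}{8}r^2-2Q(\kappa_t)r\}$, as $\gamma(t)$ grows the first exponent decreases but $-2Q(\kappa_t)r$ increases (since $\kappa_t\downarrow 0$ forces $Q(\kappa_t)=2\sqrt{\kappa_t-\kappa_t^2}\downarrow 0$), so the two effects compete and monotonicity of $\varphi_t$ would require a separate argument. Fortunately, the fixed-metric route avoids this entirely; you should discard piece (iii) and the monotonicity-of-$\rho_{2,t}$ claim, and instead carry the monotonicity of $\gamma$ through the contraction inequality as described above.
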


Proposition \ref{Prop_Discrete_Approx} proves (\ref{Main_Result_StepBound_SA}) as follows. 
Combining Lemma A.15 in \citep{Cucumber} and Proposition \ref{Prop_Discrete_Approx}, we obtain 
\begin{align}
	\left| E[L(Z_t^{(n)})] - E[L(Z_t^{(\eta, n)})] \right|
	\leq O_{\alpha_1} \left( (1 + \| x_0 \|_{\R^d}^3) e^{O_{\alpha_1}(\gamma(t))} e^{- c_t t} \int_0^t e^{ c_t u} \sum_{k=0}^\infty \sqrt{\eta_{k+1}} \chi_{(T_k, T_{k+1}]}(u) du \right). 
\end{align}
Let $k$ be the natural number such that $T_k < \sqrt{t} \leq T_{k+1}$. 
Then $e^{\sqrt{t}-1} \leq k + 1$ since $T_k = \sum_{j=1}^k j^{-1} \leq 1 + \log k$. 
Therefore, 
\begin{align*}
	\int_0^t e^{ c_t u} \sum_{k=0}^\infty \sqrt{\eta_{k+1}} \chi_{(T_k, T_{k+1}]}(u) du 
	&= \int_0^{\sqrt{t}} e^{ c_t u} \sum_{k=0}^\infty \sqrt{\eta_{k+1}} \chi_{(T_k, T_{k+1}]}(u) du + \int_{\sqrt{t}}^t e^{ c_t u} \sum_{k=0}^\infty \sqrt{\eta_{k+1}} \chi_{(T_k, T_{k+1}]}(u) du \\
	&\leq c_t^{-1} (e^{ c_t \sqrt{t}} - 1) + c_t^{-1} e^{- \frac{\sqrt{t}}{2} + \frac{1}{2}} (e^{ c_t t} - e^{ c_t \sqrt{t}})
\end{align*}
holds. 
In addition, by (\ref{DefOfBetaAndXi}) and Assumption \ref{Assum_Main2}, 
\begin{align*}
	c_t 
	= \Omega_{\alpha_1} \left( \exp \left\{ - |(\log)^2(t)|^{O_{\alpha_1}(1)} \right\} \right)
\end{align*}
holds for sufficiently large $t > 0$. 
Hence, 
\begin{align*}
	&\left| E[L(Z_t^{(n)})] - E[L(Z_t^{(\eta, n)})] \right| \\
	&\quad\leq (1 + \| x_0 \|_{\R^d}^3) O_{\alpha_1} \left( \exp \left\{ - \frac{\Omega_{\alpha_1}(t - \sqrt{t})}{|(\log)^2(t)|^{O_{\alpha_1}(1)}} + |(\log)^2(t)|^{O_{\alpha_1}(1)} \right\} + \exp \left\{ \frac{1}{2} + |(\log)^2(t)|^{O_{\alpha_1}(1)} - \frac{\sqrt{t}}{2} \right\}  \right). 
\end{align*}
Taking $t = \alpha(s, s^{2/3})$, (\ref{Main_Result_StepBound_SA}) follows from Lemma \ref{Lem_beta_s_t}. 
\qed

\appendix
\renewcommand{\thesection}{\Alph{section}}
\section{Appendix}

\subsection{Difference between SGLD and SA}


In this subsection, we prepare a result on the approximation of SA by SGLD (Lemma \ref{Lem_Diff_Between_Y_Z}), which is a refinement of Lemma 2 in \citep{Sheu}. 

Let $F \in C^1(\R^d; \R)$ be $(m, b)$-dissipative and $M$-smooth, and fix $s > 0$. 
We define the SA and SGLD along with the gradient $\nabla F$ of $F$ by 
\begin{align}
	dZ_t 
	&= - \nabla F(Z_t) dt + \sqrt{2 / \gamma(t)} dW_t, \label{EQ_SA_Gen} \\
	dY(s, t) 
	&= - \nabla F(Y(s, t)) dt + \sqrt{2 / \gamma(s)} dW_t, \label{EQ_Noisy_LD_Gen}
\end{align}
respectively. 
Furthermore, for given $0 < r_0 < r_1 < r_2$, let 
\begin{align}
	\Omega_{i, F} = \{ x \in \R^d \mid F(x) \leq r_i \},\quad 
	\partial \Omega_{i, F} = \{ x \in \R^d \mid F(x) = r_i \} \label{Def_Omega_Fixed}
\end{align}
and for two sets $\Gamma_1, \Gamma_2 \subset \R^d$, we denote the distance between them by
\begin{align}
	\label{Def_Distance_Between_Sets}
	{\rm dist}(\Gamma_1, \Gamma_2) 
	= \inf \{ \| x - y \|_{\R^d} \mid x \in \Gamma_1, y \in \Gamma_2 \}.
\end{align}
Finally, for any $s \geq 0$, the solutions of (\ref{EQ_Noisy_LD_Gen}) and 
\begin{align}
	dX_t 
	= - \nabla F(X_t) dt, \label{EQ_Non_Noisy_LD_Gen}
\end{align}
with the same initial values $x \in \R^d$ are denoted by $Y^x(s, \cdot)$ and $X^x$, and the path of $X^x$ until $t$ is denoted by $\Gamma_F^x(t) = \{ X_s^x \mid 0 \leq s \leq t \}$. 

\begin{lem}
	\label{Lem_Lower_Bound_Dissipative_Func}
	(Lemma 2 in \citep{Ragi})
	For any $c \in (0, 1)$ and $x \in \R^d$, 
	\begin{align*}
		F(c x) + \frac{1}{2} (1 - c^2) m \| x \|_{\R^d}^2 + b \log c
		\leq F(x) 
		\leq F(0) + \frac{1}{2} \| \nabla F(0) \|_{\R^d}^2 + \frac{M+1}{2} \| x \|_{\R^d}^2 
	\end{align*}
	holds. 
	In particular, for $r > 0$, $F(x) \geq r$ and $F(x) \leq r$ indicate 
	\begin{align}
		\| x \|_{\R^d}^2 &\geq \frac{2}{M+1} \left( r - F(0) - \frac{1}{2} \| \nabla F(0) \|_{\R^d}^2 \right) \label{IQ_Out_Omega}
	\end{align}
	and
	\begin{align}
		\| x \|_{\R^d}^2 &\leq \frac{4}{m} \left( r + \frac{1}{2} b \log 2 - \inf_{w \in \R^d} F(w) \right),  \label{IQ_In_Omega}
	\end{align}
	respectively. 
\end{lem}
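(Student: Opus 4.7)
The plan is to prove the two chain inequalities on $F(x)$ first and then read off the two radius bounds as direct consequences. Both inequalities reduce to calculus along the ray $t \mapsto F(tx)$ combined with, respectively, the $M$-smoothness and the $(m,b)$-dissipativity hypothesis.

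For the upper bound, I would invoke the standard descent-type inequality implied by $M$-smoothness,
\begin{equation*}
F(x) \leq F(0) + \langle \nabla F(0), x \rangle_{\R^d} + \frac{M}{2} \| x \|_{\R^d}^2,
\end{equation*}
which follows from Taylor's formula with remainder $\int_0^1 \langle \nabla F(tx) - \nabla F(0), x \rangle \, dt$ bounded by $\frac{M}{2}\|x\|^2$. Then Cauchy--Schwarz together with the AM--GM bound $ab \leq \tfrac12(a^2+b^2)$ applied to $\langle \nabla F(0), x\rangle$ absorbs the linear term into $\tfrac12 \|\nabla F(0)\|^2 + \tfrac12 \|x\|^2$, yielding exactly $F(x)\leq F(0)+\tfrac12\|\nabla F(0)\|^2+\tfrac{M+1}{2}\|x\|^2$.

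For the lower bound, I would set $\phi(t) = F(tx)$ for $t \in (0,1]$, so that $\phi'(t) = \langle \nabla F(tx), x\rangle_{\R^d}$. Dissipativity at the point $tx$ gives $\langle \nabla F(tx), tx\rangle \geq m t^2 \|x\|^2 - b$, hence, after dividing by $t$,
\begin{equation*}
\phi'(t) \geq m t \|x\|_{\R^d}^2 - \frac{b}{t}.
\end{equation*}
Integrating from $c$ to $1$,
\begin{equation*}
F(x) - F(cx) = \int_c^1 \phi'(t)\, dt \geq \frac{m(1-c^2)}{2} \|x\|_{\R^d}^2 + b \log c,
\end{equation*}
which is the claimed left-hand inequality. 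The only mild subtlety is the integrability of $1/t$ near $c>0$, which is fine since the integration is over $[c,1]$; this is the key place the restriction $c \in (0,1)$ is used.

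The two displayed consequences are then algebraic rearrangements. If $F(x) \geq r$, inserting into the upper bound and solving for $\|x\|^2$ gives (\ref{IQ_Out_Omega}). For (\ref{IQ_In_Omega}), I would apply the lower bound with the specific choice $c = 1/\sqrt 2$: then $\tfrac12(1-c^2) = \tfrac14$ and $b\log c = -\tfrac{b}{2}\log 2$, so that $F(cx) + \tfrac{m}{4}\|x\|^2 - \tfrac{b}{2}\log 2 \leq F(x) \leq r$; estimating $F(cx) \geq \inf_{w \in \R^d} F(w)$ and solving for $\|x\|^2$ yields exactly the stated bound $\|x\|^2 \leq \tfrac{4}{m}(r + \tfrac12 b \log 2 - \inf_w F(w))$. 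No step here is genuinely hard; the only thing to be careful about is the choice $c = 1/\sqrt 2$, which is engineered so that the coefficient $(1-c^2)m/2$ gives the factor $m/4$ needed to produce the prefactor $4/m$ in (\ref{IQ_In_Omega}).
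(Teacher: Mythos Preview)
Your proposal is correct and follows essentially the same approach as the paper: integrate along the ray $t\mapsto F(tx)$, use dissipativity for the lower bound and $M$-smoothness plus AM--GM for the upper bound, and specialize to $c=1/\sqrt{2}$ for the radius estimate (\ref{IQ_In_Omega}). The only cosmetic difference is that the paper bounds $\int_0^1 \langle x,\nabla F(tx)\rangle\,dt$ via $\|\nabla F(tx)\|\leq \|\nabla F(0)\|+Mt\|x\|$ directly, whereas you phrase the same step as the descent lemma; the resulting inequality is identical.
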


\begin{proof}
	By Taylor's theorem
	\begin{align*}
		F(x) - F(c x) 
		&= \int_c^1 \langle x, \nabla F(t x) \rangle_{\R^d} dt
		= \int_c^1 \frac{1}{t} \langle t x, \nabla F(t x) \rangle_{\R^d} dt
		\geq \int_c^1 \frac{1}{t} \{ m t^2 \| x \|_{\R^d}^2 -b \} dt
		= \frac{1}{2} (1 - c^2) m \| x \|_{\R^d}^2 + b \log c, \\
		F(x) - F(0) 
		&= \int_0^1 \langle x, \nabla F(t x) \rangle_{\R^d} dt 
		\leq \| x \|_{\R^d} \int_0^1 (\| \nabla F(0) \|_{\R^d} + M \| x \|_{\R^d} t ) dt
		= \| \nabla F(0) \|_{\R^d} \| x \|_{\R^d} + \frac{M}{2} \| x \|_{\R^d}^2
	\end{align*}
	hold. 
	Taking $c = 1 / \sqrt{2}$, the rest of the statement follows. 
\end{proof}

\begin{lem}
	\label{Lem_Select_r0}
	For any $\delta > 0$, let
	\begin{align}
		\tilde{r}_0(\delta) &= \frac{M+1}{2} \delta + F(0) + \frac{1}{2} \| \nabla F(0) \|_{\R^d}^2. \label{Condi_on_r0}
	\end{align}
	Then, if $x \in \R^d$ satisfies $F(x) \geq \tilde{r}_0(2b / m)$, 
	\begin{align}
		\label{IQ_Select_r0}
		\| \nabla F(x) \|_{\R^d}^2 - \frac{2}{\beta} \Delta F(x) 
		\geq 0
	\end{align}
	holds for any $\beta \geq 4 M d / m b$. 
\end{lem}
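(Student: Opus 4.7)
The plan is to deduce (\ref{IQ_Select_r0}) by combining three ingredients under the hypothesis $F(x) \geq \tilde{r}_0(2b/m)$: a lower bound on $\|x\|_{\R^d}$ coming from Lemma \ref{Lem_Lower_Bound_Dissipative_Func}, a lower bound on $\|\nabla F(x)\|_{\R^d}^2$ coming from $(m,b)$-dissipativity, and an upper bound on $\Delta F(x)$ coming from $M$-smoothness. The definition of $\tilde{r}_0(\delta)$ is tailored so that (\ref{IQ_Out_Omega}) applied with $r = \tilde{r}_0(2b/m)$ yields exactly $\|x\|_{\R^d}^2 \geq 2b/m$, which is the key geometric consequence of the hypothesis.

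For the gradient lower bound, I would combine Cauchy--Schwarz with (\ref{Def_of_Dissipative}) to get $\|\nabla F(x)\|_{\R^d} \cdot \|x\|_{\R^d} \geq m \|x\|_{\R^d}^2 - b$. Since $\|x\|_{\R^d}^2 \geq 2b/m$ forces $b \leq (m/2)\|x\|_{\R^d}^2$, the right-hand side is bounded below by $(m/2)\|x\|_{\R^d}^2$, and after dividing by $\|x\|_{\R^d}$ (which is nonzero) and squaring one obtains
\begin{equation*}
	\|\nabla F(x)\|_{\R^d}^2 \geq \frac{m^2}{4} \|x\|_{\R^d}^2 \geq \frac{mb}{2}.
\end{equation*}
On the other side, $M$-smoothness (\ref{Def_of_M-smooth}) implies that the Hessian of $F$ (wherever defined) has operator norm at most $M$, so that $|\Delta F(x)| \leq Md$ and consequently $\tfrac{2}{\beta}\Delta F(x) \leq 2Md/\beta$. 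The hypothesis $\beta \geq 4Md/(mb)$ then reduces this upper bound to $mb/2$, which matches the lower bound on $\|\nabla F(x)\|_{\R^d}^2$ and yields (\ref{IQ_Select_r0}).

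There is no real obstacle; the only step requiring attention is recognizing that the choice $\delta = 2b/m$ inside $\tilde{r}_0$ is exactly what is needed to absorb the additive constant $b$ in the dissipativity inequality and convert it into a clean linear lower bound on $\|\nabla F(x)\|_{\R^d}$. Once this calibration is identified, the threshold $\beta \geq 4Md/(mb)$ is forced by balancing $2Md/\beta$ against $mb/2$, and the conclusion follows by elementary arithmetic.
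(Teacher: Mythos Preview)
Your proposal is correct and follows essentially the same route as the paper's proof: both use (\ref{IQ_Out_Omega}) to obtain $\|x\|_{\R^d}^2 \geq 2b/m$, then combine dissipativity with this bound to get $\|\nabla F(x)\|_{\R^d}^2 \geq mb/2$, and finally bound $\Delta F(x) \leq Md$ via $M$-smoothness. The only cosmetic difference is that the paper writes $\|\nabla F(x)\|_{\R^d} \geq m\|x\|_{\R^d} - b/\|x\|_{\R^d} \geq \sqrt{mb/2}$ directly, whereas you first absorb $b \leq (m/2)\|x\|_{\R^d}^2$ and then square; the resulting constant $mb/2$ and the threshold on $\beta$ are identical.
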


\begin{proof}
	According to (\ref{IQ_Out_Omega}), we have 
	\begin{align*}
		\| x \|_{\R^d}^2 
		\geq \frac{2}{M+1} \left( \tilde{r}_0(2b / m) - F(0) - \frac{1}{2} \| \nabla F(0) \|_{\R^d}^2 \right)
		= \frac{2 b}{m}.
	\end{align*}
	Thus, by the $(m, b)$-dissipativity of $F$, 
	\begin{align*}
		\| \nabla F(x) \|_{\R^d} 
		\geq \frac{1}{\| x \|_{\R^d}} (m \| x \|_{\R^d}^2 - b) 
		=m \| x \|_{\R^d} - \frac{b}{\| x \|_{\R^d}}
		\geq \sqrt{\frac{m b}{2}}
	\end{align*}
	holds. 
	On the other hand, $M$-smoothness of $F$ indicates $\Delta F(x) \leq M d$. 
	Therefore, if $\beta \geq 4 M d / m b$, then (\ref{IQ_Select_r0}) holds. 
\end{proof}

\begin{lem}
	\label{Lem_Select_r1}
	For any $\delta > 0$, let
	\begin{align}
		\label{Condi_on_r1}
		\tilde{r}_1(r_0, \delta)
		\geq F(0) + \frac{1}{2} \| \nabla F(0) \|_{\R^d}^2 + \frac{4(M+1)}{m} \left( r_0 + \frac{m \delta^2}{4} + \frac{1}{2} b \log 2 - \inf_{w \in \R^d} F(w) \right). 
	\end{align}
	Then, if $r_1 \geq \tilde{r}_1(r_0, \delta)$, we have ${\rm dist}(\Omega_{0, F}, \partial \Omega_{1, F}) \geq \delta$. 
\end{lem}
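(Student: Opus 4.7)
The plan is to prove the contrapositive: assume there exist points $x \in \Omega_{0, F}$ and $y \in \partial \Omega_{1, F}$ with $\|x-y\|_{\R^d} < \delta$, and derive an upper bound on $r_1$ that contradicts $r_1 \geq \tilde{r}_1(r_0, \delta)$. The two inputs are Lemma \ref{Lem_Lower_Bound_Dissipative_Func} applied in its two directions: inequality (\ref{IQ_In_Omega}) with $r = r_0$ applied to $x$, and inequality (\ref{IQ_Out_Omega}) with $r = r_1$ applied to $y$.

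Concretely, I would first set $A_0 := \frac{4}{m}\bigl(r_0 + \frac{1}{2} b \log 2 - \inf_w F(w)\bigr)$, so that (\ref{IQ_In_Omega}) gives $\|x\|_{\R^d}^2 \leq A_0$. Then from the assumption $\|x-y\|_{\R^d} < \delta$ and the triangle inequality, $\|y\|_{\R^d} \leq \|x\|_{\R^d} + \delta \leq \sqrt{A_0} + \delta$, hence $\|y\|_{\R^d}^2 \leq (\sqrt{A_0}+\delta)^2 \leq 2A_0 + 2\delta^2$ via the elementary bound $(a+b)^2 \leq 2a^2 + 2b^2$. Inserting this into (\ref{IQ_Out_Omega}) for $y$ with $r = r_1$ yields
\begin{equation*}
\frac{2}{M+1}\left(r_1 - F(0) - \tfrac{1}{2}\|\nabla F(0)\|_{\R^d}^2\right) \leq \|y\|_{\R^d}^2 \leq 2 A_0 + 2 \delta^2,
\end{equation*}
which rearranges to $r_1 \leq F(0) + \frac{1}{2}\|\nabla F(0)\|_{\R^d}^2 + (M+1)(A_0+\delta^2)$. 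Substituting the definition of $A_0$ and noting that $(M+1)\delta^2 = \frac{4(M+1)}{m}\cdot \frac{m\delta^2}{4}$ is precisely what absorbs the $\frac{m\delta^2}{4}$ term inside the parentheses, this upper bound coincides with $\tilde{r}_1(r_0, \delta)$ (up to the inequality being non-strict), contradicting the hypothesis.

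There is no real obstacle here; the only trick is recognizing that applying $(a+b)^2 \leq 2a^2 + 2b^2$ rather than keeping the cross term $2\sqrt{A_0}\,\delta$ is what produces the clean factor $\frac{4(M+1)}{m}$ in front of both $r_0$ and $\frac{m\delta^2}{4}$ in the statement. Once the two ends of Lemma \ref{Lem_Lower_Bound_Dissipative_Func} are paired with this estimate, the algebra is forced.
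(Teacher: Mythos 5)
Your proof is correct and is essentially the paper's argument presented as a contrapositive rather than directly. The paper fixes $x \in \Omega_{0,F}$ and $\|v\|_{\R^d} < \delta$, applies the upper bound from Lemma \ref{Lem_Lower_Bound_Dissipative_Func} to $F(x+v)$ together with $\|x+v\|_{\R^d}^2 \leq 2\|x\|_{\R^d}^2 + 2\|v\|_{\R^d}^2$ and (\ref{IQ_In_Omega}) for $\|x\|_{\R^d}^2$, and concludes $F(x+v) < r_1$ so $x+v \notin \partial\Omega_{1,F}$; you run the identical estimate in reverse through (\ref{IQ_Out_Omega}) (which is just the paper's upper bound rearranged) to contradict $r_1 \geq \tilde{r}_1(r_0,\delta)$, and you correctly flag that the strict inequality $\|x-y\|_{\R^d} < \delta$ is what closes the contradiction.
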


\begin{proof}
	Fix arbitrarily $x \in \Omega_{0, F}$ and $v \in \R^d$ such that $\| v \|_{\R^d} < \delta$. 
	Then, Lemma \ref{Lem_Lower_Bound_Dissipative_Func} yields
	\begin{align*}
		F(x + v) 
		&\leq F(0) + \frac{1}{2} \| \nabla F(0) \|_{\R^d}^2 + \frac{M+1}{2} \| x + v \|_{\R^d}^2 \\
		&< F(0) + \frac{1}{2} \| \nabla F(0) \|_{\R^d}^2 + (M+1) (\| x \|_{\R^d}^2 + \delta^2)  \\ 
		&\leq F(0) + \frac{1}{2} \| \nabla F(0) \|_{\R^d}^2 + \frac{4(M+1)}{m} \left( r_0 + \frac{m \delta^2}{4} + \frac{1}{2} b \log 2 - \inf_{w \in \R^d} F(w) \right), 
	\end{align*}
	and therefore $x + v \in \Omega_{1, F}$ cannot hold. 
\end{proof}

\begin{lem}
	\label{Lem_Lower_Bound_delta0}
	With the notation of (\ref{Condi_on_r1}), let $r_1 \geq \tilde{r}_1(r_0, 1)$. 
	If we define   
	\begin{align}
		\label{Condi_on_Epsilon}
		\varepsilon = \frac{1}{2\sqrt{2}} \left\{ \| \nabla F(0) \|_{\R^d}^2 + \frac{4 M^2}{m} \left( r_1 + \frac{1}{2} b \log 2 - \inf_{w \in \R^d} F(w) \right) \right\}^{-1/2}
	\end{align}
	and 
	\begin{align}
		\label{Def_of_Delta0}
		\delta_0 
		= {\rm dist}\left(\Omega_{0, F}, {\textstyle \bigcup_{x \in \partial \Omega_{1, F}}} \Gamma_F^x(\varepsilon) \right), 
	\end{align}
	then, $\delta_0 \geq 1/2$ holds. 
\end{lem}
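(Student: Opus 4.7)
The plan is to reduce the statement to the combination of two geometric facts: first, the level set $\partial \Omega_{1, F}$ is at distance at least $1$ from $\Omega_{0, F}$ by a direct application of Lemma \ref{Lem_Select_r1}; and second, the deterministic gradient flow started from any $x \in \partial \Omega_{1, F}$ cannot move farther than $1/2$ in time $\varepsilon$. Once both are established, the triangle inequality gives, for any $y \in \Omega_{0, F}$ and any $z = X_s^x \in \Gamma_F^x(\varepsilon)$ with $x \in \partial \Omega_{1, F}$,
\begin{align*}
\| y - z \|_{\R^d} \geq \| y - x \|_{\R^d} - \| x - z \|_{\R^d} \geq 1 - \tfrac{1}{2} = \tfrac{1}{2},
\end{align*}
which is exactly the conclusion $\delta_0 \geq 1/2$.

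For the first fact, since $r_1 \geq \tilde{r}_1(r_0, 1)$, applying Lemma \ref{Lem_Select_r1} with $\delta = 1$ immediately yields ${\rm dist}(\Omega_{0, F}, \partial \Omega_{1, F}) \geq 1$. For the second fact, I would bound the gradient along the flow. Since $X^x$ solves $\dot X_t = -\nabla F(X_t)$, we have $\frac{d}{dt} F(X_t^x) = -\| \nabla F(X_t^x) \|_{\R^d}^2 \leq 0$, so $F(X_u^x) \leq F(x) = r_1$ for all $u \geq 0$ whenever $x \in \partial \Omega_{1, F}$. The upper-bound half of Lemma \ref{Lem_Lower_Bound_Dissipative_Func} (specifically (\ref{IQ_In_Omega})) then controls $\| X_u^x \|_{\R^d}^2 \leq \frac{4}{m}\bigl( r_1 + \tfrac{1}{2} b \log 2 - \inf_w F(w) \bigr)$, and $M$-smoothness combined with $\| \nabla F(y) \|_{\R^d} \leq \| \nabla F(0) \|_{\R^d} + M \| y \|_{\R^d}$ gives
\begin{align*}
\| \nabla F(X_u^x) \|_{\R^d}^2 \leq 2 \| \nabla F(0) \|_{\R^d}^2 + \frac{8 M^2}{m} \left( r_1 + \tfrac{1}{2} b \log 2 - \inf_{w \in \R^d} F(w) \right).
\end{align*}

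Integrating $X_s^x - x = -\int_0^s \nabla F(X_u^x) du$ and using this pointwise bound,
\begin{align*}
\| X_s^x - x \|_{\R^d} \leq s \sqrt{2} \left\{ \| \nabla F(0) \|_{\R^d}^2 + \frac{4 M^2}{m} \left( r_1 + \tfrac{1}{2} b \log 2 - \inf_{w \in \R^d} F(w) \right) \right\}^{1/2}
\end{align*}
for all $s \in [0, \varepsilon]$. Substituting the definition (\ref{Condi_on_Epsilon}) of $\varepsilon$ makes the right-hand side exactly $1/2$, which is precisely what was needed.

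I do not anticipate a genuine obstacle; the argument is essentially bookkeeping. The only mildly delicate point is noticing that monotonicity of $F$ along the gradient flow keeps $X_u^x$ inside $\Omega_{1, F}$ for all $u \geq 0$, which is what allows the a priori bound on $\| X_u^x \|_{\R^d}$ (and hence on $\| \nabla F(X_u^x) \|_{\R^d}$) to hold uniformly over $u \in [0, \varepsilon]$ rather than only at $u = 0$; the choice of $\varepsilon$ is then calibrated so that the travel time times the uniform speed bound equals $1/2$.
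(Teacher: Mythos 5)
Your proof is correct and takes essentially the same route as the paper: bound $\|\nabla F\|$ along the deterministic flow via the monotonicity of $F$ under gradient descent, the quantitative level-set bound (\ref{IQ_In_Omega}), and $M$-smoothness; then observe the flow moves at most $1/2$ in time $\varepsilon$; then combine with Lemma~\ref{Lem_Select_r1} ($\delta=1$) via the triangle inequality. One small improvement over the paper's write-up: you apply the triangle inequality pointwise (for each $y\in\Omega_{0,F}$ and each $z=X_s^x$ paired with its own starting point $x\in\partial\Omega_{1,F}$), whereas the paper invokes a ``triangle inequality'' directly between the three set-to-set infimum distances, which is not a valid inequality for arbitrary sets and only works here because of exactly the pointwise pairing you made explicit.
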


\begin{proof}
	For $x \in \partial \Omega_{1, F}$, 
	\begin{align}
		\label{EQ_Non_Nosy_Ito_Formula}
		F(X_t^x)
		= r_1 - \int_0^t \| \nabla F(X_s^x) \|_{\R^d}^2 ds,\quad t \geq 0
	\end{align}
	holds. 
	In particular, by $F(X_t^x) \leq r_1$ and (\ref{IQ_In_Omega}), we have
	\begin{align}
		\label{IQ_Upper_Bound_Non_Noisy}
		\| \nabla F(X_t^x) \|_{\R^d}^2
		\leq 2 \| \nabla F(0) \|_{\R^d}^2 + 2 M^2 \| X_t^x \|_{\R^d}^2
		\leq 2 \| \nabla F(0) \|_{\R^d}^2 + \frac{8 M^2}{m} \left( r_1 + \frac{1}{2} b \log 2 - \inf_{w \in \R^d} F(w) \right). 
	\end{align}
	Thus, for any $0 \leq s \leq \varepsilon$, 
	\begin{align*}
		\| X_s^x - x \|_{\R^d} 
		\leq \varepsilon \sup_{u \geq 0} \| \nabla F(X_u^x) \|_{\R^d}
		\leq \frac{1}{2}
	\end{align*}
	holds. 
	In particular, $x \in \partial \Omega_{1, F}$ indicates ${\rm dist}\left(\partial \Omega_{1, F}, {\textstyle \bigcup_{x \in \partial \Omega_{1, F}}} \Gamma_F^x(\varepsilon) \right) \leq 1 / 2$. 
	Therefore, 
	\begin{align*}
		{\rm dist}\left(\Omega_{0, F}, {\textstyle \bigcup_{x \in \partial \Omega_{1, F}}} \Gamma_F^x(\varepsilon) \right) 
		\geq {\rm dist}(\Omega_{0, F}, \partial \Omega_{1, F}) - {\rm dist}\left(\partial \Omega_{1, F}, {\textstyle \bigcup_{x \in \partial \Omega_{1, F}}} \Gamma_F^x(\varepsilon) \right) 
	\end{align*}
	and Lemma \ref{Lem_Select_r1} complete the proof. 
\end{proof}

\begin{lem}
	\label{Lem_Existence_T0}
	Let $r_0 = r_0(s) = (\log)^4(s)$ and let $r_1 = r_1(s) = \tilde{r}_1(r_0(s), 1)$. 
	Then, for sufficiently large $s > 0$, $\varepsilon$ defined by (\ref{Condi_on_Epsilon}) satisfies 
	\begin{align}
		\label{Condi_epsilon}
		\frac{1}{(\log)^2(s)} 
		\leq \varepsilon 
		\leq \frac{e^{- 2 M}}{48 d^2}
		\leq 1.
	\end{align}
\end{lem}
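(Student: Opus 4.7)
The plan is a direct asymptotic calculation: substitute the prescribed rates $r_0(s)=(\log)^4(s)$ and $r_1(s) = \tilde r_1(r_0(s),1)$ into the formula (\ref{Condi_on_Epsilon}) for $\varepsilon$, and compare the resulting order of $\varepsilon(s)$ against the two target bounds. No auxiliary input is needed beyond elementary iterated-logarithm arithmetic, since Lemmas \ref{Lem_Select_r0} and \ref{Lem_Select_r1} already spell out the relationship between $r_0, r_1$ and $\varepsilon$.

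Concretely, taking equality in (\ref{Condi_on_r1}) with $\delta=1$ gives
\begin{align*}
	r_1(s) = F(0) + \tfrac12\|\nabla F(0)\|_{\R^d}^2 + \tfrac{4(M+1)}{m}\Bigl(r_0(s) + \tfrac{m}{4} + \tfrac12 b\log 2 - \inf_{w\in\R^d}F(w)\Bigr),
\end{align*}
so that $r_1(s) = \tfrac{4(M+1)}{m}(\log)^4(s)\,(1+o(1))$ as $s\to\infty$. The bracket in (\ref{Condi_on_Epsilon}) is then dominated by its $r_1(s)$ term and is of order $(\log)^4(s)$, so
\begin{align*}
	\varepsilon(s) = \frac{C_F + o(1)}{\sqrt{(\log)^4(s)}},
\end{align*}
with $C_F>0$ depending only on $m, M, b, F(0), \|\nabla F(0)\|_{\R^d}$, and $\inf_w F(w)$. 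The upper bound in (\ref{Condi_epsilon}) follows because $\varepsilon(s)\to 0$ monotonically for large $s$, so $\varepsilon(s)\leq e^{-2M}/(48 d^2)$ once $s$ exceeds an $(F, M, d)$-dependent threshold; the trailing $\leq 1$ is trivial.

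For the lower bound it suffices to verify $(\log)^2(s)\,\varepsilon(s)\to\infty$ as $s\to\infty$, i.e.\ that $(\log\log s)^2 / (\log)^4(s)\to\infty$, and this is manifest because $(\log\log s)^2$ grows much faster than the four-times iterated logarithm $(\log)^4(s) = \log\log\log\log s$. Merging the finitely many ``sufficiently large $s$'' thresholds used above into a single one completes the proof. The whole statement is iterated-logarithm bookkeeping: the only nontrivial content is that the very slow growth of $r_1(s)\sim (\log)^4(s)$ is still dominated quadratically by $(\log)^2(s)$, so there is no substantive obstacle.
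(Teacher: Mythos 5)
Your proof is correct and follows the same route as the paper's: both substitute $r_0(s)=(\log)^4(s)$ into (\ref{Condi_on_r1}) to deduce $r_1(s)=\Theta\bigl((\log)^4(s)\bigr)$ (the paper pins this down to $3m^{-1}(M+1)r_0(s)\le r_1(s)\le 5m^{-1}(M+1)r_0(s)$), then read off $\varepsilon(s)=\Theta\bigl(|(\log)^4(s)|^{-1/2}\bigr)$ from (\ref{Condi_on_Epsilon}) and compare growth rates of iterated logarithms. The paper's proof is just terser, leaving the upper bound implicit; your accounting of the $(\log)^2$ versus $(\log)^4$ comparison is the same computation made explicit.
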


\begin{proof}
	According to (\ref{Condi_on_r1}), for sufficiently large $s > 0$, we have $3m^{-1}(M+1) r_0(s) \leq r_1(s) \leq 5m^{-1}(M+1) r_0(s)$. 
	Therefore, $\varepsilon \geq |(\log)^2(s)|^{-1}$ holds by (\ref{Condi_on_Epsilon}). 
\end{proof}

\begin{lem}
	\label{Lem_Diff_Noisy_Non_Noisy}
	For any $\delta > 0$, let $\xi(\delta) = \inf \{ t \geq 0 \mid \| X_t^x - Y^x(s, t) \|_{\R^d} \geq \delta \}$. 
	Then, we have
	\begin{align*}
		P(\xi(\delta) < t)
		\leq \frac{4 e^{M t} d^2}{\delta} \sqrt{\frac{t}{\pi \gamma(s)}} \exp \left\{ - \frac{e^{- 2 M t} \delta^2 \gamma(s)}{4 t d^2} \right\},\quad t > 0. 
	\end{align*}
\end{lem}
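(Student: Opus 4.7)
The plan is to reduce the estimate to a tail bound on the supremum of a $d$-dimensional Brownian motion via a purely pathwise comparison of $X^x$ and $Y^x(s, \cdot)$. Setting $D_t := X_t^x - Y^x(s, t)$ and subtracting (\ref{EQ_Non_Noisy_LD_Gen}) from (\ref{EQ_Noisy_LD_Gen}), the initial condition $D_0 = 0$ gives
\[
D_t = -\int_0^t \bigl(\nabla F(X_u^x) - \nabla F(Y^x(s, u))\bigr)\, du - \sqrt{2/\gamma(s)}\, W_t,
\]
so the $M$-smoothness of $F$ yields $\|D_t\|_{\R^d} \leq M\int_0^t \|D_u\|_{\R^d}\, du + \sqrt{2/\gamma(s)}\,\|W_t\|_{\R^d}$. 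Applying Gr\"onwall to this scalar integral inequality (dominating the non-monotone stochastic forcing $u \mapsto \sqrt{2/\gamma(s)}\|W_u\|_{\R^d}$ by its running maximum) produces the key pathwise bound
\[
\sup_{u \leq t}\|D_u\|_{\R^d} \leq e^{Mt}\sqrt{2/\gamma(s)}\, \sup_{u \leq t}\|W_u\|_{\R^d}.
\]
Since $\{\xi(\delta) < t\} \subseteq \{\sup_{u \leq t}\|D_u\|_{\R^d} \geq \delta\}$, the whole problem reduces to estimating $P(\sup_{u \leq t}\|W_u\|_{\R^d} \geq r)$ with $r = \delta e^{-Mt}\sqrt{\gamma(s)/2}$.

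For that Brownian estimate, I would write $W = (W^1, \dots, W^d)$ and use the coarse bound $\|W_u\|_{\R^d} \leq \sum_{i=1}^d |W_u^i| \leq d\,\max_i |W_u^i|$, so that a union bound gives
\[
P\bigl(\sup_{u \leq t}\|W_u\|_{\R^d} \geq r\bigr) \leq d\, P\bigl(\sup_{u \leq t}|W_u^1| \geq r/d\bigr).
\]
For a one-dimensional standard Brownian motion, symmetry plus the reflection principle yield $P(\sup_{u \leq t}|W_u^1| \geq a) \leq 4 P(W_t^1 \geq a)$, and the Mills-ratio bound $P(\mathcal{N}(0,1) \geq x) \leq (x\sqrt{2\pi})^{-1} e^{-x^2/2}$ then gives
\[
P\bigl(\sup_{u \leq t}|W_u^1| \geq a\bigr) \leq \frac{4\sqrt{t}}{a\sqrt{2\pi}}\exp\!\left\{-\frac{a^2}{2t}\right\}.
\]

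Substituting $a = r/d$ and then $r = \delta e^{-Mt}\sqrt{\gamma(s)/2}$, the prefactor collapses to $(4 d^2 e^{Mt}/\delta)\sqrt{t/(\pi \gamma(s))}$ and the exponent to $-e^{-2Mt}\delta^2\gamma(s)/(4t d^2)$, matching the statement exactly. There is no serious obstacle here — the Gr\"onwall step removes all randomness from the drift, so no exponential-martingale or Bernstein-type argument is required. The only subtle point is purely a bookkeeping one: to recover the factor $d^2$ that appears both in the prefactor and (squared) in the exponential, one must use the crude $\ell^2$--$\ell^1$ comparison $\|W_u\|_{\R^d} \leq d \max_i |W_u^i|$ rather than the tighter $\sqrt{d}\max_i|W_u^i|$, which would instead give $d^{3/2}$ and $d$ respectively and would not match the target inequality.
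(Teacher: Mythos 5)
Your proof is correct and follows essentially the same route as the paper: the same Gr\"onwall reduction to the running maximum of $\|W\|_{\R^d}$, the same coordinatewise union bound with the crude factor $d$ (rather than $\sqrt{d}$) needed to reproduce the stated constants, and the same one-dimensional maximal estimate (the paper invokes it as Problem 2.8.3 in Karatzas--Shreve, which you reconstruct via reflection plus the Mills-ratio bound).
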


\begin{proof}
	By the definitions of $X^x$ and $Y^x(s, \cdot)$, 
	\begin{align*}
		\| X_t^x - Y^x(s, t) \|_{\R^d}
		&\leq M \int_0^t \| X_u^x - Y^x(s, u) \|_{\R^d} du 
		+ \sqrt{\frac{2}{\gamma(s)}} \| W_t \|_{\R^d}
	\end{align*}
	holds, and therefore we obtain
	\begin{align*}
		\| X_t^x - Y^x(s, t) \|_{\R^d} 
		\leq \sqrt{\frac{2}{\gamma(s)}} e^{M t} \max_{0 \leq u \leq t} \| W_u \|_{\R^d}
	\end{align*}
	by Gronwall's lemma. 
	Thus, $P(\xi(\delta) < t) \leq P(\max_{0 \leq u \leq t} \| X_u^x - Y^x(s, u) \|_{\R^d} \geq \delta ) \leq P( \max_{0 \leq u \leq t} \| W_u \|_{\R^d} \geq e^{-M t} \delta \sqrt{\gamma(s) / 2} )$. 
	Applying Problem 2.8.3 in \citep{kara} to the R.H.S. of 
	\begin{align*}
		P\left( \max_{0 \leq u \leq t} \| W_u \|_{\R^d} \geq e^{-M t} \delta \sqrt{\gamma(s) / 2} \right)
		\leq \sum_{i=1}^d P\left( \max_{0 \leq u \leq t} | W_{i, u} | \geq e^{-M t} \delta \sqrt{\gamma(s) / 2 d^2} \right),  
	\end{align*}
	we obtain the desired result. 
\end{proof}

\begin{lem}
	\label{Lem_Markov_Time_Sequence}
	Let $r_0 = r_0(s)$ and $r_1 = r_1(s)$ be the same as in Lemma \ref{Lem_Existence_T0} and let $r_2 = r_2(s) = r_1(s) + 6$. 
	In addition, for any continuous process $V$, let 
	\begin{align}
		\label{Def_Exit_Omega2}
		\tau(V) = \inf \{ t \geq 0 \mid V_t \notin \Omega_{2, F} \}.
	\end{align}
	Then, for sufficiently large $s > 0$, the following inequality holds. 
	\begin{align}
		\label{IQ_Exit_Time_Prob_Bound}
		&P\left(\tau(Y^x(s, \cdot)) < (\log)^2(s) \right) 
		\leq \frac{2}{(\log)^2(s)} \left( 1 + \frac{8 e^M d^2}{\sqrt{\pi \gamma(s)}} \right),\quad x \in \Omega_{0, F}.
	\end{align}
\end{lem}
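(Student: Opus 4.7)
The plan is to exhibit $\{\tau(Y^x(s,\cdot))<(\log)^2(s)\}$ as the union of a deterministic-flow displacement event (controlled by Lemma \ref{Lem_Diff_Noisy_Non_Noisy}) and a residual event controlled by an It\^o/Chebyshev estimate on $F(Y)$. I will partition $[0,(\log)^2(s)]$ into $N=\lceil(\log)^2(s)/\varepsilon\rceil$ subintervals of length at most $\varepsilon$, with $\varepsilon$ as in Lemma \ref{Lem_Existence_T0}. The bounds $1/(\log)^2(s)\leq\varepsilon\leq e^{-2M}/(48d^2)$ give both $N\leq((\log)^2(s))^2$ and a slot size small enough to apply the gradient-flow stability estimates of Lemma \ref{Lem_Lower_Bound_delta0}.

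First I would apply the strong Markov property at each restart time $t_k=k\varepsilon$ and bound, via Lemma \ref{Lem_Diff_Noisy_Non_Noisy} with $\delta=1/2$ and $t=\varepsilon$, the probability that $\sup_{[t_k,t_{k+1}]}\|Y^x(s,t)-X^{Y^x(s,t_k)}(t-t_k)\|\geq 1/2$. On the complementary event, if $Y^x(s,t_k)\in\Omega_{0,F}$ then $X^{Y^x(s,t_k)}$ stays in $\Omega_{0,F}$ (since $F$ decreases along the gradient flow), and Lemma \ref{Lem_Lower_Bound_delta0} ensures that a displacement of at most $1/2$ cannot carry $Y^x(s,\cdot)$ across the tube of trajectories emanating from $\partial\Omega_{1,F}$, so the noisy trajectory remains in $\Omega_{1,F}\subset\Omega_{2,F}$. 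Summing these per-slot deviation bounds over the $N$ subintervals produces the second summand $8e^Md^2/[(\log)^2(s)\sqrt{\pi\gamma(s)}]$ in the target estimate; the exponential factor in Lemma \ref{Lem_Diff_Noisy_Non_Noisy} absorbs the $((\log)^2(s))^2$ count because $\gamma(s)=(\log)^3(s)$ grows, and the upper bound $\varepsilon\leq e^{-2M}/(48 d^2)$ ensures $e^{-2M\varepsilon}\geq 1/2$ so the exponent does not collapse.

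The residual summand $2/(\log)^2(s)$ captures subintervals whose starting point $Y^x(s,t_k)$ lies in $\Omega_{1,F}\setminus\Omega_{0,F}$ and slots in which $Y$ has already wandered close to $\partial\Omega_{2,F}$. Applying It\^o's formula to $F(Y^x(s,t\wedge\tau))$ and using Lemma \ref{Lem_Select_r0} --- which applies for large $s$ because $r_0=(\log)^4(s)\geq\tilde r_0(2b/m)$ and $\gamma(s)=(\log)^3(s)\geq 4Md/(mb)$ --- shows that $F(Y^x(s,t\wedge\tau))$ is a supermartingale on $\{F(Y)\geq r_0\}$, with an at most $Md/\gamma(s)$ additional drift on $\{F(Y)<r_0\}$. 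Combined with the specific choice $r_2=r_1+6$, a Chebyshev-type argument applied to the stopped process $(F(Y_{t\wedge\tau})-r_1)_+$ over $[0,(\log)^2(s)]$ yields the $2/(\log)^2(s)$ contribution.

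The main obstacle is synchronising the deterministic and stochastic pieces: Lemma \ref{Lem_Lower_Bound_delta0} gives the $1/2$-separation only for flows starting exactly on $\partial\Omega_{1,F}$, whereas the restart points $Y^x(s,t_k)$ are arbitrary points of $\Omega_{1,F}\setminus\Omega_{0,F}$. Because $\|\nabla F\|$ can be of order $\sqrt{r_1}\sim\sqrt{(\log)^4(s)}$ on $\partial\Omega_{1,F}$, no naive pointwise Lipschitz comparison between $F(Y)$ and $F(X^{Y_{t_k}})$ can carry through; it is precisely the interplay between the supermartingale property of $F(Y)$ above level $r_0$ and the tube separation from Lemma \ref{Lem_Lower_Bound_delta0} that closes the argument. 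Tracking the constants --- in particular verifying that the factor 6 in $r_2=r_1+6$ suffices when combined with $\varepsilon\leq e^{-2M}/(48 d^2)$ --- is where the stated numerical bound crystallises.
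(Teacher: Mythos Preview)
Your proposal has a genuine gap in the ``residual'' step. The Chebyshev argument on $(F(Y_{t\wedge\tau})-r_1)_+$ cannot deliver the $2/(\log)^2(s)$ contribution. Applying It\^o--Tanaka introduces the local time of $F(Y)$ at level $r_1$, which you have no independent way to control; and if you instead use the raw supermartingale $F(Y_{t\wedge\sigma_1\wedge\tau})$ on a single excursion from $\partial\Omega_{1,F}$, optional stopping gives only
\[
P(\tau<\sigma_1)\ \le\ \frac{r_1-r_0}{r_2-r_0}\ =\ 1-\frac{6}{r_1-r_0+6},
\]
which tends to $1$ since $r_1-r_0\asymp(\log)^4(s)$. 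A linear first-moment bound simply cannot see the gap $r_2-r_1=6$ against the much larger scale $r_1-r_0$; after multiplying by the $\sim((\log)^2(s))^2$ slot count the estimate is vacuous. You correctly identify the restart-point obstacle, but the supermartingale property of $F(Y)$ is not strong enough to close it.

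What is actually needed is the \emph{exponential} supermartingale, and this is exactly what the paper uses. After reducing by the strong Markov property to $x\in\partial\Omega_{1,F}$ and introducing the excursion times $\sigma_i$ (returns to $\Omega_{0,F}$) and $\theta_i$ (exits from $\Omega_{1,F}$), the paper applies Girsanov: under $Q_F(\tau,Y)\,dP$ the stopped process is $x+\sqrt{2/\gamma(s)}\,W$, and on $\{\tau<\sigma_1\}$ Lemma~\ref{Lem_Select_r0} gives $Q_F^{-1}\le\exp\{-\tfrac{\gamma(s)}{2}(r_2-r_1)\}=e^{-3(\log)^3(s)}=1/((\log)^2(s))^3$. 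The choice $r_2=r_1+6$ is calibrated precisely for this exponential bound, not for any linear Chebyshev estimate. Iterating by strong Markov yields $P(\tau<\sigma_k)\le k/((\log)^2(s))^3$. Your per-slot idea does enter, but with restarts at the \emph{random} times $\theta_i\in\partial\Omega_{1,F}$ (so Lemma~\ref{Lem_Lower_Bound_delta0} applies directly) to bound $P(\sigma_k<k\varepsilon)$; taking $k\approx((\log)^2(s))^2$ so that $k\varepsilon\ge(\log)^2(s)$ then finishes the proof.
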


\begin{proof}
	In this proof, we denote the underlying filtration as $\{ \F_t \}$. 	
	First, we only have to show (\ref{IQ_Exit_Time_Prob_Bound}) for $x \in \partial \Omega_{1, F}$. 
	In fact, denoting $\theta(V) = \inf \{ t \geq 0 \mid V_t \in \partial \Omega_{1, F} \}$, $\theta(V) \leq \tau(V)$ holds for $x \in \Omega_{0, F}$. 
	Thus, if (\ref{IQ_Exit_Time_Prob_Bound}) is true for all $x \in \partial \Omega_{1, F}$, then 
	\begin{align*}
		P\left( \tau(Y^x(s, \cdot)) < (\log)^2(s) \right)
		&= E\left[ P\left( \tau(Y^x(s, \cdot)) < (\log)^2(s) \,\Big|\, \F_\theta \right); \{ \theta < (\log)^2(s) \} \right], 
	\end{align*}
	and therefore the strong Markov property of $Y^x(s, \cdot)$ yields the desired result. 
	
	To show (\ref{IQ_Exit_Time_Prob_Bound}) for $x \in \partial \Omega_{1, F}$, we define the sequence of stopping times as $\sigma_0(V) = 0$, $\theta_0(V) = 0$ and
	\begin{align*}
		\sigma_{i+1}(V) 
		= \inf \{ t > \theta_i(V) \mid V_t \in \Omega_{0, F} \},\quad
		\theta_i(V) 
		= \inf \{ t > \sigma_i(V) \mid V_t \notin \Omega_{1, F} \},\qquad i \geq 1.
	\end{align*}
	Let  
	\begin{align*}
		Q_F(t, V)
		&
		= \exp \left\{ \frac{\gamma(s)}{2} F(V_t) - \frac{\gamma(s)}{2} F(V_0) - \frac{1}{2} \int_0^t \Delta F(V_s) ds + \frac{\gamma(s)}{4} \int_0^t \| \nabla F(V_s) \|_{\R^d}^2 ds \right\}.
	\end{align*}
	Then, by Ito's formula, we have
	\begin{align*}
		Q_F(t, Y^x(s, \cdot)) 
		&= \exp \left\{ \sqrt{\frac{\gamma(s)}{2}} \int_0^t \langle \nabla F(Y^x(s, u)), dW_u \rangle_{\R^d} - \frac{\gamma(s)}{4} \int_0^t \| \nabla F(Y^x(s, u)) \|_{\R^d}^2 du \right\}. 
	\end{align*}
	Therefore, By Girsanov's theorem, $Y^x(s, \cdot)$ on $[0, \tau(Y^x(s, \cdot))]$ under the $Q_F(\tau(Y^x(s, \cdot)), Y^x(s, \cdot)) dP$ has the same distribution as $x + \sqrt{2 / \gamma(s)} W$. 
	
	On the other hand, by $x \in \partial \Omega_{1, F}$, if a continuous process $V$ satisfies $V_0 = x$, then $F(V_u) \geq r_0(s)$ holds for any $u \leq \sigma_1(V)$. 
	Thus, for sufficiently large $s > 0$, Lemma \ref{Lem_Select_r0} yields
	\begin{align*}
		\| \nabla F(V_u) \|_{\R^d}^2 - \frac{2}{\gamma(s)} \Delta F(V_u) 
		\geq 0.
	\end{align*}
	Therefore, by $\gamma(s) = (\log)^3(s)$ and $r_2(s) - r_1(s) = 6$, the following inequality holds on $\{ \tau(V) < \sigma_1(V) \}$. 
	\begin{align*}
		Q_F(\tau(V), V)^{-1}
		&= \exp \left\{  \frac{\gamma(s)}{2} F(x) - \frac{\gamma(s)}{2} F(V_{\tau(V)}) + \frac{1}{2} \int_0^{\tau(V)} \Delta F(V_u) du - \frac{\gamma(s)}{4} \int_0^{\tau(V)} \| \nabla F(V_u) \|_{\R^d}^2 du \right\} \\
		&\leq \frac{1}{|(\log)^2(s)|^3}. 
	\end{align*}
	In particular, since $\{ \tau(Y^x(s, \cdot)) < \sigma_1(Y^x(s, \cdot)) \} \in \F_{\tau(Y^x(s, \cdot))}$, denoting $\tilde{W} = x + \sqrt{2 / \gamma(s)} W$, we have 
	\begin{align*}
		P(\tau(Y^x(s, \cdot)) < \sigma_1(Y^x(s, \cdot))) 
		&= E\left[ Q_{F, s}(\tau(\tilde{W}), \tilde{W})^{-1}; \{ \tau(\tilde{W}) < \sigma_1(\tilde{W}) \} \right] 
		\leq \frac{1}{|(\log)^2(s)|^3}.
	\end{align*}
	Combining this inequality and the strong Markov property of $Y^x(s, \cdot)$, for all $k \in \N$, we obtain 
	\begin{align}
		P(\tau(Y^x(s, \cdot)) < \sigma_k(Y^x(s, \cdot))) 
		&= \sum_{i=1}^k P( \sigma_{i-1}(Y^x(s, \cdot)) \leq \tau(Y^x(s, \cdot)) < \sigma_i(Y^x(s, \cdot)) ) \notag \\
		&= \sum_{i=1}^k E[ P(\tau(Y^x(s, \cdot)) < \sigma_i(Y^x(s, \cdot)) \,|\, \F_{\theta_{i-1}(Y^x(s, \cdot))}); \{ \sigma_{i-1}(Y^x(s, \cdot)) \leq \tau(Y^x(s, \cdot)) \} ] \notag \\
		&\leq \frac{k}{|(\log)^2(s)|^3}. \label{IQ_tau_sigma}
	\end{align}
	
	If we define $\varepsilon$ and $\delta_0$ as (\ref{Condi_on_Epsilon}) and (\ref{Def_of_Delta0}), respectively, then $\xi(\delta_0)$ defined in Lemma \ref{Lem_Diff_Noisy_Non_Noisy} satisfies $P(\sigma_1(Y^x(s, \cdot)) < \varepsilon) \leq P(\xi(\delta_0) < \varepsilon)$. 
	In fact, since $Y^x(s, t_0) \in \Omega_{0, F}$ for $t_0 = \sigma_1(Y^x(s, \cdot))$, if $t_0 < \varepsilon$, then by the definition of $\delta_0$
	\begin{align*}
		\| Y^x(s, t_0) - X_{t_0}^x \|_{\R^d}
		\geq {\rm dist}\left( \Omega_{0, F}, {\textstyle \bigcup_{y \in \partial \Omega_{1, F}}} \Gamma_F^y(t_0) \right)
		\geq \delta_0
	\end{align*}
	holds. 
	Therefore, $\xi(\delta_0) \leq t_0 < \varepsilon$ by the definition of $\xi(\delta_0)$. 
	On the other hand, for sufficiently large $s > 0$, $\varepsilon$ satisfies (\ref{Condi_epsilon}). 
	Thus, Lemmas \ref{Lem_Lower_Bound_delta0} and \ref{Lem_Diff_Noisy_Non_Noisy} yield
	\begin{align*}
		P(\sigma_1(Y^x(s, \cdot)) < \varepsilon) 
		\leq \frac{8 e^M d^2}{\sqrt{\pi \gamma(s)}} \exp \left\{ - \frac{e^{- 2 M} \gamma(s)}{16 \varepsilon d^2} \right\}
		\leq \frac{8 e^M d^2}{|(\log)^2(s)|^3\sqrt{\pi \gamma(s)}}.
	\end{align*}
	
	Whereas, for any $k \in \N$, we have
	\begin{align*}
		\sigma_k(Y^x(s, \cdot)) 
		= \sigma_1(Y^x(s, \cdot)) + \sum_{i=1}^{k-1} (\sigma_{i+1}(Y^x(s, \cdot)) - \sigma_i(Y^x(s, \cdot)))
		\geq \sigma_1(Y^x(s, \cdot)) + \sum_{i=1}^{k-1} (\sigma_{i+1}(Y^x(s, \cdot)) - \theta_i(Y^x(s, \cdot))).
	\end{align*}
	Thus, on the event $\{ \sigma_k(Y^x(s, \cdot)) < k \varepsilon \}$, there exists at least one $0 \leq i \leq k-1$ such that $\sigma_{i+1}(Y^x(s, \cdot)) - \theta_i(Y^x(s, \cdot)) < \varepsilon$. 
	Therefore, since $P(\sigma_{i+1}(Y^x(s, \cdot)) - \theta_i(Y^x(s, \cdot)) < \varepsilon) \leq \sup_{y \in \partial \Omega_{1, F}} P(\sigma_1(Y^x(s, \cdot)) < \varepsilon)$ by the strong Markov property of $Y^x(s, \cdot)$, 
	\begin{align}
		\label{IQ_tau_k_epsilon}
		P(\sigma_k(Y^x(s, \cdot)) < k \varepsilon) 
		\leq \frac{8 k e^M d^2}{|(\log)^2(s)|^3\sqrt{\pi \gamma(s)}}
	\end{align}
	holds. 
	Combining (\ref{IQ_tau_sigma}) and (\ref{IQ_tau_k_epsilon}), we obtain  
	\begin{align*}
		P(\tau(Y^x(s, \cdot)) < k \varepsilon) 
		\leq P(\tau(Y^x(s, \cdot)) < \sigma_k(Y^x(s, \cdot))) + P(\sigma_k(Y^x(s, \cdot)) < k \varepsilon) 
		\leq \frac{k}{|(\log)^2(s)|^3} \left( 1 + \frac{8 e^M d^2}{\sqrt{\pi \gamma(s)}} \right). 
	\end{align*}
	As a result, taking $k \in \N$ so that $|(\log)^2(s)|^2 \leq k < |(\log)^2(s)|^2 + 1$, since $(\log)^2(s) \leq k \varepsilon$ holds by (\ref{Condi_epsilon}), we obtain
	\begin{align*}
		P\left( \tau(Y^x(s, \cdot)) < (\log)^2(s) \right) 
		\leq P\left( \tau(Y^x(s, \cdot)) < k \varepsilon \right) 
		\leq \frac{|(\log)^2(s)|^2 + 1}{|(\log)^2(s)|^3}  \left( 1 + \frac{8 e^M d^2}{\sqrt{\pi \gamma(s)}} \right)
		\leq \frac{2}{(\log)^2(s)} \left( 1 + \frac{8 e^M d^2}{\sqrt{\pi \gamma(s)}} \right), 
	\end{align*}
	as desired. 
\end{proof}

\begin{lem}
	\label{Lem_beta_s_t}
	The function $\alpha(s, \cdot)$ defined by (\ref{Def_alpha_s_t}) satisfies $\alpha(s, t) \geq s + t$. 
	In addition, if $s > 0$ is sufficiently large, then $\alpha(s, t) \leq s + 2 t$ holds for any $t \leq s$. 
\end{lem}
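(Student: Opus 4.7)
The plan is to exploit the fact that $\gamma$ is strictly increasing to obtain the lower bound $\alpha(s,t) \geq s+t$ for free, and then to use the extremely slow growth of $\gamma(u) = (\log)^3(u)$ to show that on $[s, 3s]$ the ratio $\gamma(u)/\gamma(s)$ is arbitrarily close to $1$ when $s$ is large.

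For the first inequality, since $\gamma$ is strictly increasing, $\gamma(u) \geq \gamma(s)$ for all $u \geq s$, so $\gamma(s)/\gamma(u) \leq 1$. Thus
\begin{align*}
t = \int_s^{\alpha(s,t)} \frac{\gamma(s)}{\gamma(u)}\,du \leq \alpha(s,t) - s,
\end{align*}
which is the claimed inequality $\alpha(s,t) \geq s+t$.

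For the second inequality, since $\alpha(s,\cdot)$ is the inverse (in the second argument) of the strictly increasing map $r \mapsto \int_s^r \gamma(s)/\gamma(u)\,du$, it suffices to verify that $\int_s^{s+2t} \gamma(s)/\gamma(u)\,du \geq t$ whenever $t \leq s$ and $s$ is large. For such $(s,t)$ we have $u \in [s, s+2t] \subset [s, 3s]$, so using $\gamma(u) = (\log)^3(u)$ for large $u$ (Assumption \ref{Assum_Main2}), the key observation is that
\begin{align*}
(\log)^3(3s) - (\log)^3(s) \longrightarrow 0 \qquad (s \to \infty),
\end{align*}
since $\log(3s) - \log(s) = \log 3$ is bounded and two further applications of $\log$ contract this difference to $0$. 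In particular, for all sufficiently large $s$ and all $u \in [s, 3s]$, one has $\gamma(u)/\gamma(s) \leq 2$, equivalently $\gamma(s)/\gamma(u) \geq 1/2$. Therefore
\begin{align*}
\int_s^{s+2t} \frac{\gamma(s)}{\gamma(u)}\,du \geq \frac{1}{2}\cdot 2t = t,
\end{align*}
which by monotonicity of $\alpha(s, \cdot)$ yields $\alpha(s,t) \leq s+2t$.

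Nothing here is genuinely difficult; the only care needed is the quantitative statement that $(\log)^3$ is \emph{slowly varying enough} on $[s, 3s]$ to make the ratio $\gamma(u)/\gamma(s)$ close to $1$. Once that telescoping in $\log$ is made explicit, both inequalities follow from a direct comparison of the defining integral to $t$.
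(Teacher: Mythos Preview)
Your proof is correct and follows essentially the same approach as the paper: both obtain $\alpha(s,t)\geq s+t$ directly from the monotonicity of $\gamma$, and both reduce $\alpha(s,t)\leq s+2t$ to showing $\int_s^{s+2t}\gamma(s)/\gamma(u)\,du\geq t$ via the bound $(\log)^3(3s)\leq 2(\log)^3(s)$ for large $s$. Your justification of this last inequality through the slow variation of $(\log)^3$ is slightly more detailed than the paper's one-line assertion, but the argument is the same.
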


\begin{proof}
	For each fixed $s \geq 0$, the map $r \mapsto \int_s^r \frac{\gamma(s)}{\gamma(u)} du$ tends to infinity as $r \to \infty$. 
	Thus, $\alpha(s, t)$ is well-defined as the inverse of strictly increasing continuous function. 
	By the monotonicity of $\gamma(t)$, 
	\begin{align*}
		t
		= \int_s^{\alpha(s, t)} \frac{\gamma(s)}{\gamma(u)} du
		\leq \alpha(s, t) - s,
	\end{align*}
	holds, and therefore $s + t \leq \alpha(s, t)$. 
	
	For sufficiently large $s > 0$, we have $\gamma(s) = (\log)^3(s)$ and $2(\log)^3(s) \geq (\log)^3(3s)$. 
	Thus, for $t \leq s$, 
	\begin{align*}
		\int_s^{s + 2 t} \frac{(\log)^3(s)}{(\log)^3(u)} du
		\geq \frac{2 t (\log)^3(s)}{(\log)^3(s + 2 t)}
		\geq t
	\end{align*}
	holds. 
	Therefore, $\alpha(s, t) \leq s + 2 t$ follows from the definition of $\alpha(s, t)$.
\end{proof}

\begin{lem}
	\label{Lem_Diff_Between_Y_Z}
	Let $H \in C^1(\R^d; \R)$ be $M$-smooth. 
	If $r_0(s) = (\log)^4(s)$ and $h(s) \leq s^{2/3}$, then for any $x \in \Omega_{0, F}$, 
	\begin{align*}
		\left| E_{s, x}[H(Z_{\alpha(s, h(s))})] - E\left[ H(Y^x(s, h(s))) \right] \right|
		\leq O_{m, b, M, \gamma(0), H(0), \| \nabla H(0) \|_{\R^d}, F(0), \| \nabla F(0) \|_{\R^d}, d}\left( \frac{1 + \| x \|_{\R^d}^2}{\sqrt{ (\log)^2(s)}} \right)
	\end{align*}
	holds. 
	Here, $E_{s, x}[\cdot] = E[\cdot \,|\, Z_s = x]$. 
\end{lem}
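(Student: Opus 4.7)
The plan is to rescale time so that the SA process $Z$ becomes an SDE with the same diffusion coefficient as $Y^x(s,\cdot)$, then split the error into a large-deviations bad event (handled by Lemma \ref{Lem_Markov_Time_Sequence} and Cauchy--Schwarz) and a small good event on which both processes stay inside $\Omega_{2,F}$ (handled by Girsanov localized to that bounded region).

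First, define $\tilde Z_u \coloneqq Z_{\alpha(s,u)}$. Differentiating the defining relation for $\alpha$ yields $\partial_u \alpha(s,u) = \gamma(\alpha(s,u))/\gamma(s)$, and a standard Brownian time change shows that $\tilde Z$ satisfies
\begin{align*}
d\tilde Z_u = -\frac{\gamma(\alpha(s,u))}{\gamma(s)} \nabla F(\tilde Z_u)\, du + \sqrt{2/\gamma(s)}\, d\tilde W_u,\qquad \tilde Z_0 = x,
\end{align*}
for a driving Brownian motion $\tilde W$. In particular $\tilde Z$ and $Y^x(s,\cdot)$ share the same noise coefficient and differ only through the drift multiplier $\gamma(\alpha(s,u))/\gamma(s)$. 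By Lemma \ref{Lem_beta_s_t} we have $\alpha(s,u) \in [s, s+2u]$ for $u \leq h(s) \leq s^{2/3}$, and since $\gamma(t) = (\log)^3 t$ grows extremely slowly, a first-order Taylor estimate gives
\begin{align*}
\varepsilon_s \coloneqq \sup_{0 \leq u \leq h(s)}\left| 1 - \frac{\gamma(\alpha(s,u))}{\gamma(s)} \right| = O\bigl( s^{-1/3}/\text{polylog}(s) \bigr).
\end{align*}

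Next, decompose the difference according to whether $\tilde Z$ exits $\Omega_{2,F}$. Let $\tau$ denote the exit time from $\Omega_{2,F}$, as in (\ref{Def_Exit_Omega2}). Because $x \in \Omega_{0,F}$, Lemma \ref{Lem_Markov_Time_Sequence} (applied to both $\tilde Z$, whose generator is close to that of $Y$, and directly to $Y^x(s,\cdot)$) gives $P(\tau < h(s)) = O(1/(\log)^2 s)$. On this bad event, Cauchy--Schwarz combined with the elementary $L^2$ control $E[H(\tilde Z_{h(s)})^2]^{1/2} + E[H(Y^x(s,h(s)))^2]^{1/2} = O(1+\|x\|^2)$---which follows from the $M$-smoothness of $H$ (so $|H(y)| = O(1+\|y\|^2)$) together with the fourth-moment bound of Lemma \ref{Lp_bound_SGLD}---produces the target rate $O((1+\|x\|_{\R^d}^2)/\sqrt{(\log)^2 s})$.

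On the complementary good event $\{\tau \geq h(s)\}$, I would invoke Girsanov's theorem to realize $Y^x(s,\cdot)$ as $\tilde Z$ under a change of measure whose density $M_{h(s)}$ involves the stochastic integral $\int_0^{h(s)} \sqrt{\gamma(s)/2}\,(1 - \gamma(\alpha(s,u))/\gamma(s))\, \nabla F(\tilde Z_u)^\top d\tilde W_u$. Because $\|\nabla F\|_{\R^d}$ is uniformly bounded by a constant $C_F$ on $\Omega_{2,F}$ (by $M$-smoothness combined with the diameter bound from Lemma \ref{Lem_Lower_Bound_Dissipative_Func}), Novikov's condition holds on the localized event and the quadratic variation satisfies $\langle \log M \rangle_{h(s) \wedge \tau} \leq h(s)\,\gamma(s)\,\varepsilon_s^2\, C_F^2 = o(1/(\log)^2 s)$. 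Consequently $E[(1 - M_{h(s)})^2 \mathbf{1}_{\{\tau \geq h(s)\}}]^{1/2} = o(1/\sqrt{(\log)^2 s})$, and one more Cauchy--Schwarz application absorbs the good-event contribution into the bad-event bound. The principal obstacle is precisely this good-event estimate: a naive synchronous coupling or Ito/backward-Kolmogorov semigroup comparison would incur factors $e^{Mh(s)} = e^{Ms^{2/3}}$ through the Lipschitz norm of $\nabla F$, which are ruinous. The Girsanov localization to $\Omega_{2,F}$ replaces the global Lipschitz constant by the bounded amplitude $C_F$ of $\nabla F$ on that compact sublevel set, and the extreme slowness of $(\log)^3$ makes $M_{h(s)}$ essentially deterministic on the good event.
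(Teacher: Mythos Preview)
Your proposal follows exactly the paper's route: the time change $\tilde Z_u = Z_{\alpha(s,u)}$, the split by the exit time $\tau$ from $\Omega_{2,F}$, Girsanov localized to $\Omega_{2,F}$ on the good event (the paper's $I_1$), and Cauchy--Schwarz plus Lemma \ref{Lp_bound_SGLD} on the bad event (the paper's $I_2$). The only notable deviation is in bounding $P(\tau(\tilde Z) < h(s))$: you propose to invoke Lemma \ref{Lem_Markov_Time_Sequence} directly for $\tilde Z$, but that lemma is stated and proved only for $Y^x(s,\cdot)$; the paper instead reuses the good-event Girsanov estimate with $H\equiv 1$ to transfer the exit-probability bound from $Y^x(s,\cdot)$ to $\tilde Z$, which avoids having to re-derive the lemma for the time-inhomogeneous drift $(\gamma(\alpha(s,u))/\gamma(s))\nabla F$.
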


\begin{proof}
	According to L\'{e}vy's theorem, 
	\begin{align*}
		\tilde{W}_t 
		\coloneqq \sqrt{\frac{\gamma(s)}{2}} \int_s^{\alpha(s, t)} \sqrt{\frac{2}{\gamma(u)}} dW_u
	\end{align*}
	is a new Brownian motion with respect to the time changed filtration. 
	Setting $u = \alpha(s, v)$, we have
	\begin{align*}
		\int_s^{\alpha(s, t)} \nabla F(Z_s) du 
		= \int_0^t \frac{\gamma(\alpha(s, u))}{\gamma(s)} \nabla F(\tilde{Z}(s, u)) du,\quad
		\int_s^{\alpha(s, t)} \sqrt{\frac{2}{\gamma(u)}} dW_u
		= \sqrt{\frac{2}{\gamma(s)}} \tilde{W}_t.
	\end{align*}
	Thus, when $Z_s = x$, $\tilde{Z}(s, t) = Z_{\alpha(s, t)}$ satisfies
	\begin{align}
		\tilde{Z}(s, t)
		&= x
		- \int_0^t \frac{\gamma(\alpha(s, u))}{\gamma(s)} \nabla F(\tilde{Z}(s, u)) du 
		+ \sqrt{\frac{2}{\gamma(s)}} \tilde{W}_t. \label{EQ_Transformed_Z} 
	\end{align}
	To apply the result of Section 7.6.4 in \citep{Liptser} to (\ref{EQ_Transformed_Z}) and
	\begin{align*}
		Y^x(s, t)
		&= x 
		- \int_0^t \nabla F(Y^x((s, u)) du 
		+ \sqrt{\frac{2}{\gamma(s)}} W_t,  
	\end{align*}
	let 
	\begin{align*}
		S_1(t) 
		&= - \sqrt{\frac{\gamma(s)}{2}} \int_0^t \left( \frac{\gamma(\alpha(s, u))}{\gamma(s)} -1 \right) \langle \nabla F(Y^x(s, u)), dW_u \rangle_{\R^d}, \\
		S_2(t) 
		&= \frac{\gamma(s)}{2} \int_0^t \left( \frac{\gamma(\alpha(s, u))}{\gamma(s)} -1 \right)^2 \| \nabla F(Y^x(s, u)) \|_{\R^d}^2 du, 
	\end{align*}
	and let $Q(t) = \exp \left\{ S_1(t) - \frac{1}{2} S_2(t) \right\}$. 
	Then for $\tau(Y^x(s, \cdot))$ defined by (\ref{Def_Exit_Omega2}), 
	\begin{align*}
		E_{s, x}[H(\tilde{Z}(s, h(s)))] - E[H(Y^x(s, h(s)))] 
		&= \left\{ E_{s, x}[H(\tilde{Z}(s, h(s))); \{ \tau(\tilde{Z}) \geq h(s) \}] - E[H(Y^x(s, h(s))); \{ \tau(Y^x(s, \cdot)) \geq h(s) \}] \right\} \\
		&\quad+ \left\{ E[H(\tilde{Z}(s, h(s))); \{ \tau(\tilde{Z}) < h(s) \}] - E[H(Y^x(s, h(s))); \{ \tau(Y^x(s, \cdot)) < h(s) \}] \right\} \\
		&\eqqcolon I_1 + I_2
	\end{align*}
	holds. 
	In the rest of proof, we bound each of $I_1$ and $I_2$. 
	
	First, we bound $I_1$. 
	Since $Q(t)$ is a martingale on $[0, \tau(Y^x(s, \cdot))]$, by Lemma \ref{Lp_bound_SGLD}, we have 
	\begin{align*}
		| I_1 |
		&\leq \sqrt{E[H(Y^x(s, h(s)))^2]} \sqrt{E[| Q(h(s) \wedge \tau(Y^x(s, \cdot))) - 1 |^2]} \\
		&\leq O_{m, b, M, \gamma(0), H(0), \| \nabla H(0) \|_{\R^d}, \| \nabla F(0) \|_{\R^d}, d}\left( (1 + \| x \|_{\R^d}) \sqrt{E[| Q(h(s) \wedge \tau(Y^x(s, \cdot))) - 1 |^2]} \right).
	\end{align*}
	When $h(s) \leq \tau(Y^x(s, \cdot))$, setting $v = \alpha(s, u)$, we obtain
	\begin{align*}
		S_2(h(s) \wedge \tau(Y^x(s, \cdot))) 
		&\leq \frac{\gamma(s)}{2} (\| \nabla F(0) \|_{\R^d} + M r_2(s))^2 \int_0^{h(s) \wedge \tau(Y^x(s, \cdot))} \left( \frac{\gamma(\alpha(s, u))}{\gamma(s)} - 1 \right)^2 du \\
		&= \frac{\gamma(s)}{2} (\| \nabla F(0) \|_{\R^d} + M r_2(s))^2 \int_s^{\alpha(s, h(s))} \left( \frac{\gamma(u)}{\gamma(s)} - 1 \right)^2 \frac{\gamma(s)}{\gamma(u)} du \\
		&\leq \frac{1}{2 \gamma(s)} (\| \nabla F(0) \|_{\R^d} + M r_2(s))^2 \int_s^{\alpha(s, t)} \left( \gamma(u) - \gamma(s) \right)^2 du. 
	\end{align*}
	Furthermore, since we have $0 \leq \log (1 + r) \leq r$ for all $r \geq 0$, 
	\begin{align*}
		| (\log)^k(u) - (\log)^k(s) |
		= \left| \log \left( 1 + \frac{(\log)^{k-1}(u)}{(\log)^{k-1}(s)} -1 \right) \right|
		\leq \frac{1}{(\log)^{k-1}(s)} | (\log)^{k-1}(u) - (\log)^{k-1}(s) | 
	\end{align*}
	holds for any $k$. 
	Therefore, since $\gamma(s) = (\log)^3 (s)$ and $r_1(s) \leq  O_{m, b, M, F(0), \| \nabla F(0) \|_{\R^d}}\left( (\log)^4 (s) \right)$ hold for sufficiently large $s > 0$, Lemma \ref{Lem_beta_s_t} yields
	\begin{align*}
		S_2(h(s) \wedge \tau(Y^x(s, \cdot))) 
		&\leq O_{m, b, M, F(0), \| \nabla F(0) \|_{\R^d}}\left( \frac{|(\log)^4 (s)|^2}{(\log)^3 (s)} \int_s^{\alpha(s, h(s))} \left| (\log)^3(u) - (\log)^3(s) \right|^2 du \right) \\
		&\leq O_{m, b, M, F(0), \| \nabla F(0) \|_{\R^d}}\left( \frac{|(\log)^4 (s)|^2}{s^2 |\log s|^2 |(\log)^2(s)|^2 (\log)^3 (s)} \int_s^{\alpha(s, h(s))} (u - s)^2 du \right) \\
		&\leq O_{m, b, M, F(0), \| \nabla F(0) \|_{\R^d}}\left( \frac{|(\log)^4 (s)|^2}{|\log s|^2 |(\log)^2(s)|^2 (\log)^3 (s)} \right).
	\end{align*}
	Whereas, by the martingale property of $\tilde{Q}(t) = \exp \left\{ 2 S_1(t) - 2 S_2(t) \right\}$ on $[0, \tau(Y^x(s, \cdot))]$, we obtain
	\begin{align*}
		E[| Q(h(s) \wedge \tau(Y^x(s, \cdot))) - 1 |^2] 
		&= E[\tilde{Q}(h(s) \wedge \tau(Y^x(s, \cdot))) \left( \exp \left\{ S_2(h(s) \wedge \tau(Y^x(s, \cdot))) \right\} - 1 \right)]. 
	\end{align*}
	
	As a result, since we have by $e^r - 1 \leq r e^r$
	\begin{align*}
		E[| Q(h(s) \wedge \tau(Y^x(s, \cdot))) - 1 |^2] 
		&= E[| Q(h(s) \wedge \tau(Y^x(s, \cdot))) - 1 |^2] \\
		&\leq O_{m, b, M, F(0), \| \nabla F(0) \|_{\R^d}}\left( \frac{|(\log)^4 (s)|^2}{|\log s|^2 |(\log)^2(s)|^2 (\log)^3 (s)} \right) 
	\end{align*}
	for sufficiently large $s > 0$, 
	\begin{align}
		\label{IQ_Bound_I1}
		|I_1| 
		\leq O_{m, b, M, \gamma(0), H(0), \| \nabla H(0) \|_{\R^d}, F(0), \| \nabla F(0) \|_{\R^d}, d}\left( \frac{1 + \| x \|_{\R^d}}{(\log)^2(s)} \right)
	\end{align}
	holds as desired. 
	
	Finally, we bound $I_2$. 
	Applying (\ref{IQ_Bound_I1}) to $H = 1$, we obtain
	\begin{align*}
		| P(\tau(\tilde{Z}) < h(s)) - P(\tau(Y^x(s, \cdot)) < h(s)) |
		&= | P(\tau(\tilde{Z}) \geq h(s)) - P(\tau(Y^x(s, \cdot)) \geq h(s)) | \\
		&\leq O_{m, b, M, \gamma(0), F(0), \| \nabla F(0) \|_{\R^d}, d}\left( \frac{1 + \| x \|_{\R^d}}{(\log)^2(s)} \right).
	\end{align*}
	Therefore, by Lemmas \ref{Lem_Markov_Time_Sequence} and \ref{Lp_bound_SGLD}, 
	\begin{align*}
		|I_2| 
		&\leq \sqrt{E[H(\tilde{Z}(s, h(s)))^2]} \sqrt{P(\tau(\tilde{Z}) < h(s))} + \sqrt{E[H(Y^x(s, h(s)))^2]} \sqrt{P(\tau(Y^x(s, \cdot)) < h(s))} \\
		&\leq O_{m, b, M, \gamma(0), H(0), \| \nabla H(0) \|_{\R^d}, F(0), \| \nabla F(0) \|_{\R^d}, d}\left( \frac{1 + \| x \|_{\R^d}^2}{\sqrt{ (\log)^2(s)}}  \right)
	\end{align*}
	holds, and therefore the proof is completed. 
\end{proof}

\subsection{Moment bound}

The following two lemmas can be proved in the similar manners to \citep{Cucumber} noting that $\gamma$ and $\eta$ are monotonic. 

\begin{lem}
	\label{Lp_bound_SGLD}
	(Lemma A.4 in \citep{Cucumber})
	Let $p \geq 2$ and let $F \in C^1(\R^d; \R)$ be $(m, b)$-dissipative and $M$-smooth. 
	Suppose that $Z$ is the solution of 
	\begin{align*}
		dZ_t
		= - \nabla F(X_t) dt + \sqrt{2 / \gamma(t)} dW_t
	\end{align*}
	with initial value $Z_0 \in L^p(\Omega; \R^d)$. 
	Then, for any $t \geq 0$, 
	\begin{align}
		\label{Lp_Bound_SGLD}
		E[\| Z_t \|_{\R^d}^p] 
		\leq e^{-\lambda(p) t} E[\| Z_0 \|_{\R^d}^p] + \frac{C(p)}{\lambda(p)} (1 - e^{-\lambda(p) t}) 
	\end{align}
	holds. 
	Here, $C(p)$ and $\lambda(p)$ are constants defined by (\ref{p-th_Lyap_Constant}). 
\end{lem}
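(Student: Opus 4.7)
The plan is to establish a Lyapunov-type drift inequality for $V_p(x) = \|x\|_{\R^d}^p$ along the time-inhomogeneous SDE, and then close by Gronwall's lemma. This is the direct analogue of Lemma A.4 in \citep{Cucumber}; the only substantive modification is that the monotonicity of $\gamma$ lets us replace $\gamma(t)$ by its lower bound $\gamma(0)$ in the noise-induced term.

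First, I would compute the action on $V_p$ of the time-dependent generator $\mathcal{L}_t = -\langle \nabla F(x), \nabla\rangle_{\R^d} + \gamma(t)^{-1}\Delta$. Away from the origin, $V_p$ is $C^2$ with $\Delta V_p(x) = p(p+d-2)\|x\|_{\R^d}^{p-2}$, so
\begin{align*}
\mathcal{L}_t V_p(x) = -p\|x\|_{\R^d}^{p-2}\langle \nabla F(x), x\rangle_{\R^d} + \frac{p(p+d-2)}{\gamma(t)}\|x\|_{\R^d}^{p-2}.
\end{align*}
Using $(m,b)$-dissipativity to bound $\langle \nabla F(x), x\rangle_{\R^d} \geq m\|x\|_{\R^d}^2 - b$, together with $\gamma(t) \geq \gamma(0)$, gives
\begin{align*}
\mathcal{L}_t V_p(x) \leq -mp\,\|x\|_{\R^d}^p + pA\,\|x\|_{\R^d}^{p-2}, \qquad A \coloneqq b + \frac{p+d-2}{\gamma(0)}.
\end{align*}

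Second, I would absorb the $\|x\|_{\R^d}^{p-2}$ term using the weighted AM-GM (Young) inequality with exponents $(p-2)/p$ and $2/p$. Applied to the pair $(\|x\|_{\R^d}^p, (2A/m)^{p/2})$, this yields $A\|x\|_{\R^d}^{p-2} \leq \frac{m(p-2)}{2p}\|x\|_{\R^d}^p + \frac{m}{p}(2A/m)^{p/2}$, so
\begin{align*}
\mathcal{L}_t V_p(x) \leq -\frac{mp}{2}\|x\|_{\R^d}^p + m\Bigl(\tfrac{2A}{m}\Bigr)^{p/2} \leq -\lambda(p)V_p(x) + C(p),
\end{align*}
with $\lambda(p)$ and $C(p)$ exactly as in (\ref{p-th_Lyap_Constant}) (the final inequality uses $m \leq mp/2 = \lambda(p)$ for $p\geq 2$).

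Third, I would apply Itô's formula on $[0, \tau_N \wedge t]$, where $\tau_N = \inf\{s : \|Z_s\|_{\R^d} > N\}$, which discards the local martingale part after taking expectations. (If $p$ is such that $V_p$ is not $C^2$ at the origin, I would first apply Itô's formula to the smooth approximant $V_{p,\varepsilon}(x) = (\|x\|_{\R^d}^2 + \varepsilon)^{p/2}$ and pass to the limit $\varepsilon\downarrow 0$; the drift bound above is stable under this approximation.) The drift inequality becomes
\begin{align*}
E[V_p(Z_{\tau_N \wedge t})] \leq E[V_p(Z_0)] + \int_0^t \bigl\{-\lambda(p) E[V_p(Z_{\tau_N \wedge s})] + C(p)\bigr\}\, ds,
\end{align*}
and Fatou, followed by sending $N \to \infty$, yields the same integral inequality for $E[V_p(Z_t)]$. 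The integral form of Gronwall's lemma then gives precisely
\begin{align*}
E[V_p(Z_t)] \leq e^{-\lambda(p) t} E[V_p(Z_0)] + \frac{C(p)}{\lambda(p)}\bigl(1 - e^{-\lambda(p) t}\bigr).
\end{align*}

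The only delicate point is the non-$C^2$ behavior of $V_p$ at the origin and the a priori integrability needed to justify the Gronwall step; both are resolved by the localization plus $\varepsilon$-smoothing described above, so the argument is essentially routine once the Lyapunov drift condition has been calibrated so that the constants land on $\lambda(p)$ and $C(p)$.
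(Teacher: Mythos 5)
Your drift computation is correct and, after the $\gamma(t)\geq\gamma(0)$ monotonicity substitution, lands exactly on the constants $\lambda(p)$ and $C(p)$ of (\ref{p-th_Lyap_Constant}); this matches the intended argument (the paper defers to Lemma A.4 of \citep{Cucumber} for exactly this Lyapunov-type computation). The $\varepsilon$-smoothing remark about $V_p$ at the origin is also fine.

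However, the localization step as written has a genuine gap. The inequality
\begin{align*}
E[V_p(Z_{\tau_N \wedge t})] \leq E[V_p(Z_0)] + \int_0^t \bigl\{-\lambda(p)\, E[V_p(Z_{\tau_N \wedge s})] + C(p)\bigr\}\, ds
\end{align*}
is \emph{false} for large $N$: unravelling $E\bigl[\int_0^{\tau_N \wedge t}\{-\lambda(p)V_p(Z_s)+C(p)\}\,ds\bigr]$ produces the extra non-negative term $\bigl(\lambda(p)N^p - C(p)\bigr)\int_0^t P(\tau_N \leq s)\,ds$, because stopping at $\tau_N$ deletes a stretch on which $-\lambda(p)V_p(Z_{\tau_N\wedge s})+C(p)$ is strictly negative. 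Controlling this extra term (or, in the alternative reading, passing to the limit and invoking Gronwall directly on $E[V_p(Z_t)]$) requires the very moment bound you are trying to prove, so the argument is circular as stated. The standard repair is to apply It\^o's formula to $e^{\lambda(p)t}V_p(Z_t)$ rather than to $V_p(Z_t)$: then the drift inequality $\mathcal{L}_t V_p \leq -\lambda(p)V_p + C(p)$ kills the growth term, giving $E\bigl[e^{\lambda(p)(\tau_N\wedge t)}V_p(Z_{\tau_N\wedge t})\bigr] \leq E[V_p(Z_0)] + C(p)\int_0^t e^{\lambda(p)s}\,ds$, and a single application of Fatou as $N\to\infty$ followed by division by $e^{\lambda(p)t}$ yields (\ref{Lp_Bound_SGLD}) with no circularity and no Gronwall step. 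With that one substitution your proof is complete and follows the same route the paper intends.
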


\begin{lem}
	\label{Lp_bound_SGD}
	(Lemma A.5 in \citep{Cucumber})
	Assume that $F_k \in C^1(\R^d; \R)$ is $(m, b)$-dissipative and $M$-smooth for each $k$ and satisfies $\sup_{k \in \N} \| \nabla F_k(0) \|_{\R^d} \leq A$. 
	For a sequence $\eta = \{ \eta_k \}_{k=1}^\infty$ that decreases to $0$, let $Z^{(\eta)}$ be the process defined by
	\begin{align*}
		d Z_t^{(\eta)} 
		= - \nabla F_k(Z_{\phi^{(\eta)}(t)}^{(\eta)}) + \sqrt{2 / \gamma(t)} dW_t.
	\end{align*}
	Then, for all $\ell \in \N$, 
	\begin{align*}
		\sup_{t \geq 0} E[\| Z_t^{(\eta)} \|_{\R^d}^{2 \ell}]
		\leq O_{m, b, M, \gamma(0), A, d, \ell, \eta} (1 + E[\| Z_0 \|_{\R^d}^{2 \ell}])
	\end{align*}
	holds. 
\end{lem}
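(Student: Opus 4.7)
The plan is to induct on $\ell$, establishing a uniform-in-$t$ bound for $\psi_\ell(t) \coloneqq E[\|Z_t^{(\eta)}\|_{\R^d}^{2\ell}]$ by first controlling the moments at the grid points $\{T_k\}$ via a discrete recursion and then extending to arbitrary $t$ by a one-step estimate on each interval $(T_k, T_{k+1}]$.

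For the base case $\ell = 1$, on $(T_k, T_{k+1}]$ the drift $-\nabla F_{k+1}(Z_{T_k}^{(\eta)})$ is frozen, so Ito's formula applied to $\|\cdot\|_{\R^d}^2$ followed by conditional expectation yields
\[
E\left[\|Z_{T_{k+1}}^{(\eta)}\|_{\R^d}^2 \mid Z_{T_k}^{(\eta)}\right] = \|Z_{T_k}^{(\eta)}\|_{\R^d}^2 - 2\eta_{k+1}\langle Z_{T_k}^{(\eta)}, \nabla F_{k+1}(Z_{T_k}^{(\eta)})\rangle_{\R^d} + \eta_{k+1}^2\|\nabla F_{k+1}(Z_{T_k}^{(\eta)})\|_{\R^d}^2 + 2d\int_{T_k}^{T_{k+1}} \frac{du}{\gamma(u)}.
\]
Then $(m,b)$-dissipativity of $F_{k+1}$ controls the first-order term by $-2m\eta_{k+1}\|Z_{T_k}^{(\eta)}\|_{\R^d}^2 + 2b\eta_{k+1}$; $M$-smoothness together with $\sup_k \|\nabla F_k(0)\|_{\R^d} \leq A$ bounds the $\eta_{k+1}^2$ term by $O(\eta_{k+1}^2(A^2 + M^2\|Z_{T_k}^{(\eta)}\|_{\R^d}^2))$; and monotonicity of $\gamma$ gives $\int_{T_k}^{T_{k+1}} du/\gamma(u) \leq \eta_{k+1}/\gamma(0)$. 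Once $\eta_{k+1}M^2 \leq m/2$—which holds for all $k$ larger than some $K_0$ since $\eta_k \downarrow 0$—the quadratic-in-$\|Z_{T_k}^{(\eta)}\|_{\R^d}^2$ contribution is absorbed into the negative drift, giving the linear recursion $E[\|Z_{T_{k+1}}^{(\eta)}\|_{\R^d}^2 \mid Z_{T_k}^{(\eta)}] \leq (1 - m\eta_{k+1})\|Z_{T_k}^{(\eta)}\|_{\R^d}^2 + C\eta_{k+1}$, with $C = C(m,b,M,A,\gamma(0),d,\eta_1)$. A discrete Gronwall argument then bounds $\sup_k E[\|Z_{T_k}^{(\eta)}\|_{\R^d}^2]$ uniformly; the finitely many initial steps $k < K_0$ contribute only a multiplicative factor depending on $\eta$ that is folded into the stated $\eta$-dependent constant. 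Extending from grid points to arbitrary $t \in (T_k, T_{k+1}]$ follows from the same Ito identity applied over the shorter interval $(T_k, t]$.

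The inductive step from $\ell - 1$ to $\ell$ proceeds along the same lines, applying Ito to $\|\cdot\|_{\R^d}^{2\ell}$. Dissipativity again produces a leading negative term $-2\ell m \eta_{k+1}\|Z_{T_k}^{(\eta)}\|_{\R^d}^{2\ell}$, while all remaining contributions—arising from the $\tfrac12 \Delta(\|\cdot\|_{\R^d}^{2\ell})$ second-order correction, the $\eta^2$-order terms in the discrete drift, and the mismatch between $Z_t^{(\eta)}$ and $Z_{T_k}^{(\eta)}$—only involve moments $E[\|Z_{T_k}^{(\eta)}\|_{\R^d}^{2j}]$ for $j < \ell$, already controlled by the induction hypothesis. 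Young's inequality together with $\eta_k \leq \eta_1$ folds them into an additive $C'\eta_{k+1}$, and the same discrete-Gronwall-plus-interpolation argument closes the induction. The main obstacle is precisely this bookkeeping: since the generator acts at $Z_t^{(\eta)}$ while the drift is frozen at $Z_{T_k}^{(\eta)}$, one must estimate $E[\|Z_t^{(\eta)} - Z_{T_k}^{(\eta)}\|_{\R^d}^{2j}]$ up to $j = \ell$ in a way compatible with the very recursion being proved. Monotonicity of $\gamma$ (giving $\gamma(t) \geq \gamma(0)$) and of $\eta_k$ (giving $\eta_k \leq \eta_1$) are exactly what make these per-step estimates uniform in $k$, yielding the claimed $\eta$-dependent constant.
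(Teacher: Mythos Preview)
The paper does not actually supply a proof of this lemma: it merely states that the result ``can be proved in the similar manners to \cite{Cucumber} noting that $\gamma$ and $\eta$ are monotonic.'' Your sketch is a correct and standard fleshing-out of exactly that deferred argument---Ito/discrete expansion at the grid points, dissipativity to produce the contractive factor $(1-m\eta_{k+1})$, induction on $\ell$ with Young's inequality to reduce higher-order cross terms, and discrete Gronwall---and you explicitly isolate the one point the paper flags, namely that the monotonicity of $\gamma$ (so $\gamma(t)\geq\gamma(0)$) and of $\eta_k$ (so $\eta_k\leq\eta_1$) is what makes the per-step constants uniform in $k$.

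One minor imprecision: you write that in the inductive step ``all remaining contributions \dots\ only involve moments $E[\|Z_{T_k}^{(\eta)}\|_{\R^d}^{2j}]$ for $j<\ell$.'' In fact the $\eta_{k+1}^2$-order discrete-drift terms (and higher) do produce $\|Z_{T_k}^{(\eta)}\|_{\R^d}^{2\ell}$ contributions, just with coefficients $O(\eta_{k+1}^2)$; these are absorbed into the negative leading term once $\eta_{k+1}$ is small, exactly as you already do in the base case. Your subsequent sentence invoking Young's inequality and $\eta_k\leq\eta_1$ shows you have the right mechanism in mind, so this is a wording issue rather than a gap.
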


\subsection{Results on generalization bound}

\begin{thm}
	\label{Thm_Covering_Num_Bound}
	(Theorem 4.5 in \citep{Ren})
	Let $\F$ be a family of fuctions from $\mathcal{Z}$ to $\R$. 
	Denoting
	\begin{align*}
		\| f - \tilde{f} \|_{1, S} 
		\coloneqq \frac{1}{n} \sum_{i=1}^n |f(z_i) - \tilde{f}(z_i)|,\quad f, \tilde{f} \in \F, 
	\end{align*}
	let, $C(\F, \varepsilon, \| \cdot \|_{1, S})$ be the size of minimal $\varepsilon$-cover of $\F$ with respect to $\| \cdot \|_{1, S}$. 
	Then, if 
	\begin{align*}
		\sup_{f \in \F} \left( \frac{1}{n} \sum_{i=1}^n f(z_i)^2 \right)^{1/2}
		\leq c
	\end{align*}
	holds, we have
	\begin{align*}
		\hat{R}_n(\F, S)
		\leq \inf_{\varepsilon > 0} \left( \varepsilon + \frac{c \sqrt{2}}{\sqrt{n}} \sqrt{\log C(\F, \varepsilon, \| \cdot \|_{1, S})} \right), 
	\end{align*}
	where for IIDs $\sigma_1, \dots, \sigma_n$ satisfying $P(\sigma_i = 1) = P(\sigma_i = -1) = 1/2$, the empirical Rademacher complexity $\hat{R}_n(\F, S)$ is defined by 
	\begin{align}
		\label{Emp_Rademach_Complex}
		\hat{R}_n(\F, S) 
		= \frac{1}{n} E \left[ \sup_{f \in \F} \sum_{i=1}^n \sigma_i f(z_i) \,\Big|\, S \right].
	\end{align}
\end{thm}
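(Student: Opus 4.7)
The plan is to use the classical one-step discretization (or zero-order chaining) argument that bounds empirical Rademacher complexity by an $L^1(P_n)$-cover combined with Massart's finite-class lemma. Throughout I condition on $S = (z_1,\dots,z_n)$, which is fixed, and take expectation only in the Rademacher variables.

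First, fix $\varepsilon > 0$ and let $\tilde{\F} \subset \F$ be a minimal $\varepsilon$-cover of $\F$ with respect to $\|\cdot\|_{1,S}$, so $|\tilde\F| = C(\F,\varepsilon,\|\cdot\|_{1,S})$. For each $f \in \F$, select $\pi(f) \in \tilde\F$ with $\|f - \pi(f)\|_{1,S} \leq \varepsilon$. Writing
\begin{align*}
\frac{1}{n}\sum_{i=1}^n \sigma_i f(z_i)
= \frac{1}{n}\sum_{i=1}^n \sigma_i\bigl(f(z_i) - \pi(f)(z_i)\bigr) + \frac{1}{n}\sum_{i=1}^n \sigma_i \pi(f)(z_i),
\end{align*}
I bound the first piece pointwise in $\sigma$ using $|\sigma_i|=1$:
\begin{align*}
\sup_{f \in \F}\frac{1}{n}\sum_{i=1}^n \sigma_i(f(z_i)-\pi(f)(z_i))
\leq \sup_{f \in \F}\|f-\pi(f)\|_{1,S} \leq \varepsilon.
\end{align*}

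Second, for the discrete piece, view $\{(\tilde f(z_1),\dots,\tilde f(z_n)) : \tilde f \in \tilde\F\}$ as a finite subset of $\R^n$ of cardinality at most $C(\F,\varepsilon,\|\cdot\|_{1,S})$, whose $\ell^2$-norms are at most $c\sqrt{n}$ by the hypothesis $\sup_{f\in\F}(\tfrac1n\sum f(z_i)^2)^{1/2} \leq c$ (since the cover is chosen inside $\F$). Massart's finite-class lemma then yields
\begin{align*}
\frac{1}{n} E\!\left[\sup_{\tilde f \in \tilde\F}\sum_{i=1}^n \sigma_i \tilde f(z_i) \,\Big|\, S\right]
\leq \frac{c\sqrt{2\log C(\F,\varepsilon,\|\cdot\|_{1,S})}}{\sqrt{n}}.
\end{align*}
Adding the two bounds and taking infimum over $\varepsilon > 0$ produces exactly the claimed inequality.

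The computation itself is routine; the only subtlety is that Massart's lemma requires the finite collection in $\R^n$ to have a uniform $\ell^2$ bound, so one must ensure the cover centers lie inside $\F$ (otherwise the uniform bound $c$ has to be replaced by $c+\varepsilon$, which is harmless since it gets absorbed into the infimum). A secondary technical point is that the selector $f \mapsto \pi(f)$ must be measurable, but this is automatic since $\tilde\F$ is finite and one can partition $\F$ into the preimages of its members. I expect no substantive obstacle: the hypothesis is tailored to make Massart's lemma directly applicable, and the $L^1(P_n)$ approximation error contributes the additive $\varepsilon$.
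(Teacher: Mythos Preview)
The paper does not prove this theorem at all: it is stated in the appendix as a quotation of Theorem 4.5 in \cite{Ren}, with no accompanying argument. Your proof is the standard one-step discretization plus Massart's finite-class lemma, which is exactly the textbook argument behind results of this type, so there is nothing to compare against and your write-up is correct as it stands.
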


\begin{thm}
	\label{Thm_Rademach_GenBound}
	(Theorem 4.1 in \citep{Ren})
	Let $R_n(\F) = E[\hat{R}_n(\F, S)]$. 
	Then we have
	\begin{align*}
		E \left[ \sup_{f \in \F} \left| E[f(z_1)] - \frac{1}{n} \sum_{i=1}^n f(z_i) \right|\right]
		\leq 4 R_n(\F).
	\end{align*}
\end{thm}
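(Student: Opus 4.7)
The plan is to establish the bound via the classical symmetrization argument, which turns the deviation of an empirical mean from its true mean into a Rademacher average.

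First I would introduce a ghost sample $S' = (z_1', \dots, z_n')$ drawn IID from $\D$ and independent of $S$. Since $E[f(z_1)] = E_{S'}[\tfrac{1}{n} \sum_{i=1}^n f(z_i')]$ and the map $S \mapsto \sup_{f \in \F} |E[f(z_1)] - \tfrac{1}{n}\sum_i f(z_i)|$ is convex in the empirical distribution, Jensen's inequality applied to the inner expectation over $S'$ yields
\begin{align*}
	E_S\!\left[\sup_{f \in \F}\left|E[f(z_1)] - \frac{1}{n}\sum_{i=1}^n f(z_i)\right|\right]
	\leq E_{S, S'}\!\left[\sup_{f \in \F}\left|\frac{1}{n}\sum_{i=1}^n (f(z_i') - f(z_i))\right|\right].
\end{align*}
Next, because $z_i$ and $z_i'$ are IID, for every fixed $f$ the random variable $f(z_i') - f(z_i)$ is symmetric around $0$, so its joint distribution over $i$ coincides with that of $\sigma_i (f(z_i') - f(z_i))$, where $\sigma_i$ are independent Rademacher signs independent of $(S, S')$. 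Substituting and applying the triangle inequality inside the supremum gives
\begin{align*}
	E_{S, S'}\!\left[\sup_{f \in \F}\left|\frac{1}{n}\sum_i (f(z_i') - f(z_i))\right|\right]
	\leq 2 \, E_{S, \sigma}\!\left[\sup_{f \in \F}\left|\frac{1}{n}\sum_{i=1}^n \sigma_i f(z_i)\right|\right],
\end{align*}
since the two resulting terms (one with $z_i'$, one with $z_i$) have identical distribution.

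Finally, I would remove the absolute value. Using $|x| = \max\{x, -x\}$ and $\sup_f \max\{a_f, b_f\} \leq \sup_f a_f + \sup_f b_f$, combined with the fact that $-\sigma_i$ has the same law as $\sigma_i$, one obtains
\begin{align*}
	E\!\left[\sup_{f \in \F}\left|\frac{1}{n}\sum_i \sigma_i f(z_i)\right|\right]
	\leq 2 \, E\!\left[\sup_{f \in \F} \frac{1}{n}\sum_i \sigma_i f(z_i)\right]
	= 2 \, R_n(\F),
\end{align*}
where the last equality uses the definition of $R_n(\F) = E[\hat{R}_n(\F, S)]$ from (\ref{Emp_Rademach_Complex}). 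Chaining the three estimates yields the desired factor of $4$. I expect the only subtle point to be bookkeeping the factor of $2$ produced by the absolute value, since a careless application of the symmetrization step alone would give only $2 R_n(\F)$; the second factor of $2$ must be tracked as a separate consequence of splitting $|\cdot|$ into its positive and negative parts and using the symmetry $\sigma_i \overset{d}{=} -\sigma_i$.
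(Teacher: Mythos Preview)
The paper does not supply its own proof of this statement; it is quoted as Theorem~4.1 of \citep{Ren} and used as a black box. Your argument is the classical symmetrization proof, and the first two steps (ghost sample followed by insertion of Rademacher signs) are correct and standard.

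The third step, however, has a gap. The pointwise inequality $\sup_{f}\max\{a_f,b_f\}\le \sup_f a_f+\sup_f b_f$ that you invoke is equivalent to $\min\{\sup_f a_f,\sup_f b_f\}\ge 0$, which need not hold. Take $\F=\{f_0\}$ a singleton: then $\sup_f \tfrac{1}{n}\sum_i\sigma_i f(z_i)=\tfrac{1}{n}\sum_i\sigma_i f_0(z_i)$ and its negation have opposite signs, and your final display would force $E\bigl[\,|\tfrac{1}{n}\sum_i\sigma_i f_0(z_i)|\,\bigr]\le 2\cdot 0$, which is false. The symmetry $\sigma\overset{d}{=}-\sigma$ only tells you the two suprema have the same \emph{distribution}; it does not make either of them nonnegative pointwise. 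This is a known wrinkle: the constant-$4$ bound against the \emph{unsigned} complexity in (\ref{Emp_Rademach_Complex}) needs an extra hypothesis such as $0\in\F$ (so that both $\sup_f a_f$ and $\sup_f(-a_f)$ are $\ge 0$ pointwise and your inequality goes through) or $\F=-\F$ (so that the signed and unsigned complexities coincide). For the paper's application in Lemma~\ref{Lem_GenBound_Fixed_R} this is harmless, since adjoining the zero function to $\F$ increases the covering number by at most one; but as a self-contained proof of the theorem as written, your last step needs that assumption made explicit.
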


\bibliographystyle{plain}
\bibliography{article.bib}

\clearpage

\end{document}